\newcolumntype{C}{>$c<$}
\providecommand{\U}[1]{\protect\rule{.1in}{.1in}}                                                                                           
\newtheorem{theorem}{Theorem}
\newtheorem{corollary}{Corollary}
\newtheorem{definition}{Definition}
\newtheorem{lemma}{Lemma}
\DeclareMathOperator*{\dom}{dom}
\let\oldIEEEkeywords\IEEEkeywords
\def\IEEEkeywords{\oldIEEEkeywords\normalfont\bfseries\ignorespaces}
\newcommand{\argmin}{\mathrm{arg}\min}
\newcommand{\denselist}{
  \itemsep -2pt\topsep-5pt\partopsep-5pt\itemindent-10pt
}
\newcommand{\figref}[1]{Fig.~\ref{#1}}
\newcommand{\secref}[1]{\S\ref{#1}}
\newcommand{\algref}[1]{Algorithm~\ref{#1}}
\newcommand{\Hypotheses}{\Phi}
\newcommand{\hstar}{\hypothesis^*}
\newcommand{\hinit}{{\hypothesis^0}}
\newcommand{\demonstrations}{D}
\newcommand{\demonstration}{{z}}
\newcommand{\prefset}{\mathcal{P}}
\newcommand{\ordering}{\sigma}
\newcommand{\orderingof}[2]{\ordering({#1};{#2})}
\def \argmin {\mathop{\rm arg\,min}}
\newcommand{\hypothesis}[0]{\ensuremath{\phi}}
\numberwithin{equation}{section}
\begin{document}
	
\title{Adaptive Teaching of Temporal Logic Formulas to Learners with Preferences}

\author{Zhe~Xu\thanks{Zhe~Xu is with the Oden Institute
		for Computational Engineering and Sciences, University of Texas,
		Austin, Austin, TX 78712, Yuxin Chen is with the Department of Computer Science,
		University of Chicago, Chicago, IL 60637, Ufuk Topcu is with the Department
		of Aerospace Engineering and Engineering Mechanics, and the Oden Institute
		for Computational Engineering and Sciences, University of Texas,
		Austin, Austin, TX 78712, e-mail: zhexu@utexas.edu,  chenyuxin@uchicago.edu, utopcu@utexas.edu.}, Yuxin Chen, and Ufuk Topcu
}

% Machine
\date{}
\maketitle

\begin{abstract}                                                                          
Machine teaching is an algorithmic framework for teaching a target hypothesis via a sequence of examples or demonstrations. We investigate machine teaching for \textit{temporal logic formulas}---a novel and expressive hypothesis class amenable to time-related \textit{task specifications}. %Finding the optimal teaching sequence with minimal \textit{teaching cost} is notoriously hard. In fact, 
In the context of teaching temporal logic formulas, an exhaustive search even for a \emph{myopic} solution takes exponential time (with respect to the time span of the task). 
We propose an efficient approach for teaching \textit{parametric linear temporal logic formulas}. Concretely, we derive a necessary condition for the minimal time length of a demonstration to eliminate a set of hypotheses.
%, which %allows fast elimination of inconsistent hypothesis by a demonstration. 
%effectively narrows down the space of demonstrations. 
Utilizing this condition, we propose a myopic teaching algorithm by solving a sequence of integer programming problems. We further show that, under two notions of \textit{teaching complexity}, the proposed algorithm has near-optimal performance. The results strictly generalize the previous results on teaching preference-based \emph{version space learners}. We evaluate our algorithm extensively under a variety of learner types (i.e., learners 
%\yuxin{should we go a global replacement from ``learner'' to ``learner''? perhaps we can keep it in the abstract, but replacing in main paper? I agree. I will do that.} 
with different preference models) and interactive protocols (e.g., batched and adaptive). The results show that the proposed algorithms can efficiently teach a given target temporal logic formula under various settings, and that there are significant gains of teaching efficacy when the teacher adapts to the learner's current hypotheses or uses \emph{oracles}.

\end{abstract}    	

% We define \textit{accumulate-number (AN) teaching cost} and \textit{accumulate-length (AL) teaching cost} to measure the minimal number of demonstrations and the minimal accumulated time lengths of the demonstrations needed for the learner to uniquely select the target hypothesis, respectively. 

%novel temporal logic-based interactive machine teaching
%framework, which combines inference and machine teaching in sequential decision-making 
%tasks. 
	
\section{Introduction}                                   
	\label{sec_intro}
	Machine teaching, also known as algorithmic teaching, is an algorithmic framework for teaching a target hypothesis via a
	sequence of examples or demonstrations \cite{Zhu2015}. Due to limited availability of data or high cost of data collection in real-world learning scenarios, machine teaching provides a viable solution to optimize the training data for a learner to efficiently learn a target hypothesis.
	
	Recently, there has been an increasing interest in designing learning algorithms for inferring \textit{task specifications} from data (e.g., in robotics) \cite{Kong2017TAC,VazquezChanlatte2018LearningTS}. Machine teaching can be used to optimize the training data for various learning algorithms \cite{dasgupta2019teaching}, hence it can be potentially used for task specification inference algorithms. Machine teaching can be also used in adversarial settings \cite{ma2019policy} where an attacker (machine teaching algorithm) manipulates specification inference by modifying the training data. Unfortunately, finding the optimal teaching sequence with minimal \textit{teaching cost} is notoriously hard \cite{goldman1995complexity}. As task specifications are often time-related and the demonstration data are trajectories with a time evolution, an exhaustive search even for the \emph{myopic} solution has exponential time complexity with the time span of the task, making it prohibitive to run existing myopic teaching algorithms in practice.

In this paper, we investigate machine teaching of target hypothesis represented in \textit{temporal logic} \cite{Pnueli}, which has been used to express task specifications in many applications in robotics and artificial intelligence \cite{Kress2011,SonThanh_MTL}. Specifically, we use a fragment of \textit{parametric linear temporal logic} (pLTL) \cite{pLTL2014,pLTL}.
 %The complexity of computing the optimal demonstrations for minimizing the \textit{teaching costs} is exponential with the size of the ground set of demonstrations. 
 We derive a necessary condition for the minimal time length of a demonstration so that a set of pLTL formulas can be eliminated by this demonstration. Utilizing this necessary condition, we provide a \textit{myopic teaching} approach by solving a sequence of integer programming problems which, under certain conditions, guarantees a logarithmic factor of the optimal teaching cost. 
 
%  We further investigate the gain of adaptive teaching, and teaching using \textit{lookahead oracles} for reducing the teaching costs. 
 
%  We also provide conditions for teaching temporal logic formulas with positive demonstrations only.

% We implement the machine teaching approach on learners with uniform preferences, (non-uniform) global preferences and local preferences. The results show that the proposed approach can efficiently teach target hypothesis to a learner.
% The results show that the adaptive teaching can reduce the teaching cost by up to 31.15\% in comparison with non-adaptive teaching in the presence of uncertainties. The teaching with oracles can further reduce the teaching cost by up to 75\%.

We evaluate the proposed algorithm extensively under a variety of learner types (i.e., learner with different preference models) and interactive protocols (i.e., batched and adaptive). The results show that the proposed algorithm can efficiently teach a given target hypothesis under various settings, and that there are significant gains of teaching efficacy when the teacher adapts to the learner's current hypotheses (up to 31.15\% reduction in teaching cost compared to the non-adaptive setting) or with oracles (up to 75\% reduction in teaching cost compared to the myopic setting).

\vspace{0.1in}

\noindent\textbf{Related Work}	

%\noindent\textbf{Machine teaching to version space learners} Our work is closely related to the machine teaching of concept classes \cite{hunziker2019teaching,chen18adaptive}. Current work use simple concepts such as rectangles or lattices, or disjunctive normal form formulas. This work considers temporal logic formulas which are more expressive in expressing concepts with a time evolution.

% \paragraph{Algorithmic machine teaching}
%\noindent\textbf{Concept classes for algorithmic machine teaching} 
There has been a surge of interest in machine teaching in several different application domains, including personalized educational systems \cite{Zhu2015}, citizen sciences \cite{mac2018teaching}, adversarial attacks \cite{ma2019policy} and imitation learning \cite{brown2019machine}. %Most existing work consider simple concept classes, such as linear classifiers or regressors, simple geometrical objects, or disjunctive normal form formulas. In contrast, this work considers temporal logic formulas which are more expressive in expressing concepts with a time evolution.
% %Our work is closely related to the machine teaching of concept classes \cite{hunziker2019teaching,chen18adaptive}. Current work use simple concepts such as rectangles or lattices, or disjunctive normal form formulas. This work considers temporal logic formulas which are more expressive in expressing concepts with a time evolution...
% reinforcement learning-based methods \cite{haug2018teaching, brown2019machine}: simple hypothesis classes, e.g., linear regressors/classifiers; 
%
%\noindent\textbf{Learner models in machine teaching} 
Most theoretical work in algorithmic machine teaching assumes the version space model \cite{goldman1995complexity,gao2017preference,chen18adaptive,mansouri2019preference}. %a learner maintains a subset of hypotheses that are consistent with the examples received from a teacher and outputs a hypothesis from this version space. 
Recently, some teaching complexity results have been extended beyond version space learners, such as Bayesian learners \cite{zhu2013machine} %randomized learners \cite{balbach2011teaching}, 
and gradient learners \cite{liu2017iterative} (e.g., learners implementing a gradient-based optimization algorithm). %Within the model class of version space learner, preference-based models \cite{chen18adaptive,} has gained much attention due to its connection with the natural notion of incremental learning in the real-world scenarios, for both machine and human learners. %efficiency in teaching complex concepts, 
However, most algorithms are restricted to simple concept classes. 
In this work, we aim to understand the complexity of teaching the class of pLTL formulas. There has been extensive study in modeling a learner for temporal logic formulas, for example, see \cite{Hoxha2017,Kong2017TAC,Bombara2016,Neider,VazquezChanlatte2018LearningTS,zhe_info,zhe_ijcai2019,zheletter2,zhe_advisory,zhe2016,zhe_ADHS,NIPS2018Shah}, while it is much less understood in the context of machine teaching. In this paper, we abstract these learner models as preference-based version space learners, and focus on developing efficient algorithms for \emph{teaching} such learners.

\section{Parametric Linear Temporal Logic} 
In this section, we present an overview of parametric linear temporal logic  
(pLTL) \cite{pLTL2014,pLTL}. We start with the syntax and semantics of pLTL. The domain $\mathbb{B}=\{\top, \bot\}$ ($\top$ and $\bot$ represents True and False respectively) is
the Boolean domain and the time index set $\mathbb{T}=\{0, 1, \dots\}$ is a discrete set of natural numbers. We assume that there is an underlying system $\mathcal{H}$. The state $s$ of the
system $\mathcal{H}$ belongs to a finite set $S$ of states. A trajectory $\rho_L=s_0s_1\cdots s_{L-1}$ of length $L\in\mathbb{Z}_{>0}$ describing an evolution of the system $\mathcal{H}$ is a function from $\mathbb{T}$ to                    
$S$. A set $\mathcal{AP}=\{\pi_{1},\pi_{2},\dots,\pi_{n}\}$ is a set of atomic predicates. $\mathcal{L}: S\rightarrow2^{\mathcal{AP}}$ is a labeling function assigning a subset of atomic predicates in $\mathcal{AP}$ to each state $s\in S$.                                                                  
The syntax of the (F,G)-fragment bounded pLTL is defined recursively as\footnote{Although other temporal operators such as \textquotedblleft Until \textquotedblright ($\mathcal{U}$) may also appear in the full syntax of pLTL, they are omitted from the syntax of (F,G)-fragment bounded pLTL as they can be hard to interpret and are not often used for the inference of pLTL formulas.}
\[
\begin{split}
\phi:=&\top\mid\pi\mid\lnot\phi\mid\phi_{1}\wedge\phi_{2}\mid G_{\le \tau}\phi\mid F_{\le\tau}\phi,
\label{syntax}
\end{split}
\]
where $\pi$ is an \textit{atomic predicate}; $\lnot$ and $\wedge$ stand for negation and conjunction, respectively; $G_{\le \tau}$ and $F_{\le \tau}$ are temporal operators representing \textquotedblleft parameterized always\textquotedblright~and \textquotedblleft parameterized eventually\textquotedblright, respectively ($\tau\in\mathbb{T}$ is a temporal parameter). From the above-mentioned operators, we can also derive other operators such as $\vee$ (disjunction) and $\Rightarrow$ (implication). In the following content of the paper, we refer to (F,G)-fragment bounded pLTL as pLTL$_f$ for brevity.

%We can also derive $\vee$ (disjunction), $\Diamond$ (eventually), $\Box$ (always), $\mathcal{R}$ (release), $G_{\sim i}$ (parameterized always), $\mathcal{U}_{\sim i}$ (parameterized until), $\mathcal{R}_{\sim i}$ (parameterized release) and $\Rightarrow$ (implication) from the above-mentioned operators \cite{pLTL2014}. 

%The satisfaction of a pLTL formula $\phi$ as Boolean
%semantics can be found in \cite{pLTL2014}. We use $s_{1:L}\models\phi$ to denote the fact that
%a trajectory $s_{1:L}$ satisfies a pLTL formula $\phi$. If the satisfaction relations are evaluated at time index $k=1$, then we write $s_{1:L}\models_{\rm{S}}\phi$ for brevity.         
Next, we introduce the Boolean semantics of a pLTL$_f$ formula in the strong and the weak view \cite{Eisner2003,KupfermanVardi2001,Ho2014}. In the following, $(\rho_L,t)\models_{\rm{S}}\phi$ (resp. $(\rho_L,t)\models_{\rm{W}}\phi$)
means the trajectory $\rho_L$ strongly (resp. weakly) satisfies $\phi$ at time $t$, and $(\rho_L,t)\not\models_{\rm{S}}\phi$ (resp. $(\rho_L,t)\not\models_{\rm{W}}\phi$)
means the trajectory $\rho_L$ fails to strongly (resp. weakly) satisfy $\phi$ at time $t$.

\begin{definition}
	The Boolean semantics of the pLTL$_f$ in the strong view is defined recursively as
	\[
	\begin{split}
	(\rho_L,t)\models_{\rm{S}}\pi\quad\mbox{iff}\quad& t\le L-1~\mbox{and}~f(\rho_L(t))>0,\\
	(\rho_L,t)\models_{\rm{S}}\lnot\phi\quad\mbox{iff}\quad & (\rho_L,t)\not\models_{\rm{W}}\phi,\\
	(\rho_L,t)\models_{\rm{S}}\phi_{1}\wedge\phi_{2}\quad\mbox{iff}\quad &  (\rho_L,t)\models_{\rm{S}}\phi
	_{1}~\mbox{and}~(\rho_L,t)\models_{\rm{S}}\phi_{2},\\
	%		(\rho_L,t)\models_{\rm{S}}\phi_{1}\vee\phi_{2}\quad\mbox{iff}\quad &  (\rho_L,t)\models_{\rm{S}}\phi
	%		_{1}~\mbox{or}~(\rho_L,t)\models_{\rm{S}}\phi_{2},\\	
	(\rho_L,t)\models_{\rm{S}}F_{\le\tau}\phi\quad\mbox{iff}\quad &  \exists
	t^{\prime}\in[t,t+\tau], s.t.~(\rho_L,t^{\prime})\models_{\rm{S}}\phi,\\
	(\rho_L,t)\models_{\rm{S}}G_{\le\tau}\phi\quad\mbox{iff}\quad &  (\rho_L,t^{\prime})\models_{\rm{S}}\phi, \forall
	t^{\prime}\in[t, t+\tau]. 
	\end{split}
	\]
	\label{strong}
\end{definition}

\begin{definition}
	The Boolean semantics of the pLTL$_f$ in the weak view is defined recursively as
	\[
	\begin{split}
	(\rho_L,t)\models_{\rm{W}}\pi\quad\mbox{iff}\quad& \textrm{either}~t>L-1,\\
	& \mbox{or}~\big(t\le L-1~\mbox{and}~f(\rho_L(t))>0\big),\\
	(\rho_L,t)\models_{\rm{W}}\lnot\phi\quad\mbox{iff}\quad & (\rho_L,t)\not\models_{\rm{S}}\phi,\\	
	(\rho_L,t)\models_{\rm{W}}\phi_{1}\wedge\phi_{2}\quad\mbox{iff}\quad &  (\rho_L,t)\models_{\rm{W}}\phi
	_{1}~\mbox{and}~(\rho_L,t)\models_{\rm{W}}\phi_{2},\\	
	%		(\rho_L,t)\models_{\rm{W}}\phi_{1}\vee\phi_{2}\quad\mbox{iff}\quad &  (\rho_L,t)\models_{\rm{W}}\phi
	%		_{1}~\mbox{or}~(\rho_L,t)\models_{\rm{W}}\phi_{2},\\		
	(\rho_L,t)\models_{\rm{W}}F_{\le\tau}\phi\quad\mbox{iff}\quad & \exists 
	t^{\prime}\in[t,t+\tau], s.t.~(\rho_L,t^{\prime})\models_{\rm{W}}\phi,\\		
	(\rho_L,t)\models_{\rm{W}}G_{\le\tau}\phi\quad\mbox{iff}\quad & (\rho_L,t^{\prime})\models_{\rm{W}}\phi, \forall
	t^{\prime}\in[t, t+\tau].
	\label{weak}
	\end{split}
	\]
\end{definition}

\begin{figure}[th] 
	\centering
	\includegraphics[width=7cm]{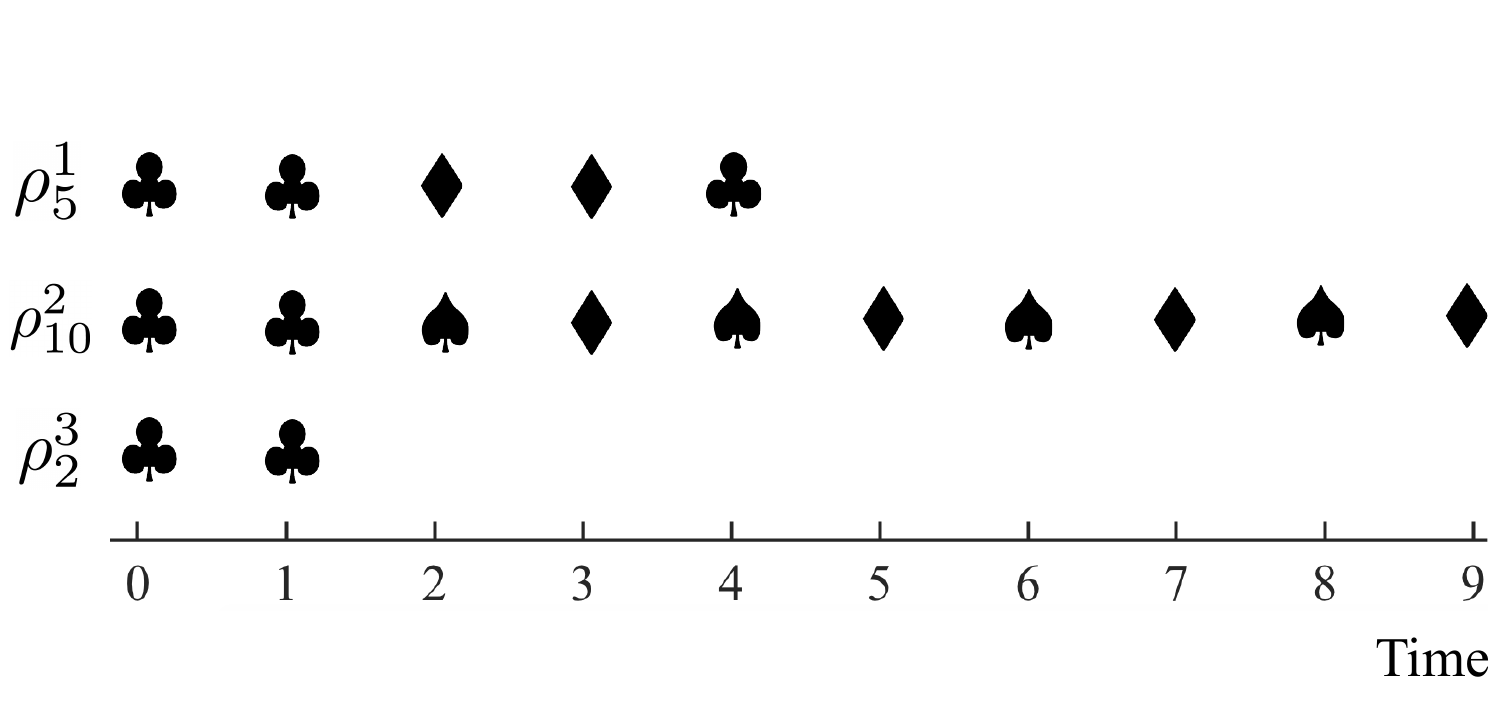}\caption{Three trajectories of different lengths.}
	\label{fig_demonstration}
\end{figure}

Intuitively, if a trajectory of finite length can be extended to infinite length, then the strong view indicates that the truth value of the formula on the infinite-length trajectory is already “determined” on the trajectory of finite length, while the weak view indicates that it may not be “determined” yet. As shown in the simple example in Fig. \ref{fig_demonstration}, with the set $S=\{\clubsuit, \spadesuit, \blacklozenge\}$ of states and three trajectories $\rho^1_5$, $\rho^2_{10}$ and $\rho^3_2$, $F_{\le 4}\clubsuit$ is strongly satisfied by all three trajectories, while $F_{\le 4}\spadesuit$ is strongly violated by $\rho^1_5$, strongly satisfied by $\rho^2_{10}$, and weakly satisfied (and also weakly violated) by $\rho^3_2$.

% From Definition \ref{strong} and Definition \ref{weak}, it can be seen that strong satisfaction or violation of a trajectory with respect to an pLTL$_f$ formula implies weak satisfaction or violation of the trajectory with respect to the same pLTL formula.       
   
\section{Teaching pLTL$_f$ Formulas with Demonstrations of Varying Lengths}        
\label{problem}
%In this section, 
We now provide the framework for teaching pLTL$_f$ formulas with demonstrations (trajectories) of varying lengths.

\subsection{Teaching Model} 
% We consider three different preference models: uniform preferences, (non-uniform) global preferences and local preferences.
Let $\mathcal{Z}:=S^{+}\times\{-1,1\}$ be the ground set of (labeled) demonstrations (trajectories), where $S^{+}$ denotes the \textit{Kleene plus} of $S$, and $\{-1,1\}$ is the set of labels. There is a \textit{hypothesis set} $\Phi=\{\phi_1, \dots, \phi_n\}$ consisting of $n\in\mathbb{Z}_{>0}$ hypothesis pLTL$_f$ formulas. The teacher knows a \textit{target hypothesis} $\phi^{\ast}\in\Phi$ and intends to teach $\phi^{\ast}$ to a learner.

% \begin{definition}         
We define the \textit{preference function} $\sigma: \Phi\times\Phi\rightarrow\mathbb{R}_{>0}$ as a function that encodes the learner's transition preferences. Specifically, given the current hypothesis $\phi$ and any two hypotheses $\phi'$ and $\phi''$, $\phi'$ is preferred to $\phi''$ from $\phi$ if and only if $\sigma(\phi';\phi)<\sigma(\phi'';\phi)$. If for any $\phi'$, $\sigma(\phi';\phi)$ does not depend on $\phi$, then $\sigma$ is a \textit{global} preference function; otherwise, $\sigma$ is a \textit{local} preference function. %For a global preference function $\sigma$, 
If $\sigma(\phi';\phi)$ is a constant for any $\phi$ and $\phi'$, then $\sigma$ is a \textit{uniform} preference function.

% For example, if the current hypothesis $\phi=F_{\le 3}\clubsuit$, and the learner prefers $\phi_1=F_{\le 5}\clubsuit$ to $\phi_2=F_{\le 5}\spadesuit$, then $\sigma(\phi_1;\phi)<\sigma(\phi_2;\phi)$. For global preference functions, the learner always prefers $\phi_1$ to $\phi_2$ at any current hypothesis, i.e., $\sigma(\phi_1; \cdot)<\sigma(\phi_2; \cdot)$. 

\begin{definition}                                                                                          
Given a target hypothesis $\phi^{\ast}$ and a demonstration $(\rho_{L}, l)$, where $l=1$ represents positive demonstration and $l=-1$ represents negative demonstration, $(\rho_{L}, l)$ is strongly inconsistent with a pLTL$_f$ formula $\phi\in\Phi$, if and only if the following condition is satisfied:
	\[
		\begin{split}
		\begin{cases}
		(\rho_{L},0)\models_{\rm{S}}\phi^{\ast}, ~~(\rho_{L},0)\models_{\rm{S}}\lnot\phi, ~~~~~~~~\mbox{if $l=1$};\\  
		(\rho_{L},0)\models_{\rm{S}}\lnot\phi^{\ast}, ~~(\rho_{L},0)\models_{\rm{S}}\phi, ~~~~~~~~\mbox{if $l=-1$}. 
		\end{cases}\\	 
		\end{split}
		\]
	\label{perfect}
	\vspace{-3mm}
\end{definition} 

\looseness -1 Starting from the teacher's first demonstration, the teaching stops as soon as the learner's current hypothesis reaches the target hypothesis, and we call it a \textit{teaching session}. For a sequence $D$ of demonstrations, we define the \textit{version space} induced by $D$, denoted as $\Phi(D)$, as the subset of pLTL$_f$ formulas in $\Phi$ that are \textit{not} strongly inconsistent with any demonstration in $D$. We use $\Theta$ to denote the random variable representing the randomness of the environment (e.g., learner's random choice of next hypothesis in the presence of ties) and $\theta$ to denote the \textit{realization} of $\Theta$ in each teaching session. %For example, with uniform preference function, the learner selects the next hypothesis randomly in the version space.  

\begin{definition}  
 We define a teaching setting as a 4-tuple 
 $\mathcal{S}=(\phi^{\ast}, \phi^0, \Phi, \dom(\Theta))$, where $\phi^{\ast}$ is the target hypothesis, $\phi^0\neq\phi^{\ast}$ is the learner's initial hypothesis, $\Phi$ is the hypothesis set and $\dom(\Theta)$ is the domain of the random variable $\Theta$. 
\end{definition}  

Let $\mathcal{T}$ %: \Phi\times2^{\Phi}\rightarrow2^\mathcal{Z}$ 
be a teacher that outputs a sequence of demonstrations based on the target hypothesis, the version space, and either the learner's initial hypothesis (non-adaptive teacher) or the learner's current hypothesis (adaptive teacher). %maps the learner's current hypothesis and current version space to a sequence of labeled demonstrations. 
For a learner with preference function $\sigma$, let $\mathcal{L}_{\sigma}: \Phi\times2^{\Phi}\times\mathcal{Z}\times\dom(\Theta)\rightarrow\Phi$ be the learner's mapping that maps the learner's current hypothesis, the current version space, the current demonstration and the randomness of the environment to the learner's next updated hypothesis. Given a teacher $\mathcal{T}$, a learner $\mathcal{L}_{\sigma}$ and the randomness $\theta$ in the teaching setting $\mathcal{S}=(\phi^{\ast}, \phi^0, \Phi, \dom(\Theta))$, we use $D_{\mathcal{S}}(\mathcal{T}, \mathcal{L}_{\sigma}, \theta)$ to denote the sequence of demonstrations provided by the teacher $\mathcal{T}$ before the learner's hypothesis reaches $\phi^*$. 

%Given the learner's initial hypothesis $\phi^0$ and the hypothesis set $\Phi$, we use $\mathcal{T}^1(\phi^0, \Phi), \dots, \mathcal{T}^K(\phi^0, \Phi)$ to denote the sequence of demonstrations provided by the teacher $\mathcal{T}$.
%For any $k\in[1,K]$ and $\theta\in \dom{\Theta}$, we use $\phi^{k}:=\mathcal{L}_{\sigma}(\phi^{k-1}, \Phi^{k-1},\mathcal{T}^{k}(\phi^0, \Phi), \theta)$ to denote the learner's hypothesis after $k$ demonstrations.   

% , and $\Phi^k :=\Phi(\{\mathcal{T}^1(\phi^0, \Phi), \dots, \mathcal{T}^k(\phi^0, \Phi)\})$ to denote the version space after $k$ demonstrations

	\begin{definition}
	\looseness -1 
	%:=\big(\mathcal{T}^1(\phi^0, \Phi), \dots, \mathcal{T}^K(\phi^0, \Phi)\big)
	%We define t
	The \textit{accumulated-number (AN) teaching cost} and \textit{accumulated-length (AL) teaching cost} are defined as %respectively as follows.
		\[
	 	\begin{split} 
		\textrm{AN-Cost}_{\mathcal{S}}(\mathcal{T}, \mathcal{L}_{\sigma}, \theta):=&\vert D_{\mathcal{S}}(\mathcal{T}, \mathcal{L}_{\sigma}, \theta)\vert,\\
		\textrm{AL-Cost}_{\mathcal{S}}(\mathcal{T}, \mathcal{L}_{\sigma}, \theta):=&\sum\limits_{(\rho^k_{L_k},  l_k)\in D_{\mathcal{S}}(\mathcal{T}, \mathcal{L}_{\sigma}, \theta)}L_k.
		\end{split}
		\]
		%where $D_{\mathcal{S}}(\mathcal{T}, \mathcal{L}_{\sigma}, \theta):=\big(\mathcal{T}^1(\phi^0, \Phi), \dots, \mathcal{T}^K(\phi^0, \Phi)\big)$ is a sequence of demonstrations,
	    %$\phi^k\neq\phi^{\ast}$ for any $k\in[0,K-1]$, and $\phi^K=\phi^{\ast}$.
		
% 		\[
% 		\begin{split}
		
% 		\end{split}
% 		\]		
		\label{cost}
	\end{definition} 
	
% $D=\{(\rho^1_{L_1}, l_1), (\rho^2_{L_2}, l_2), \dots\}$ is a set of labeled demonstrations. We also say $\phi^{\ast}$ is teachable from $D$ if and only if $\Phi(D)=\{\phi^{\ast}\}$.
% % 		\begin{align}
% 		\begin{split}
		
% 		\end{split}
% 		\label{teachingSet}
% 		\end{align}	

Intuitively, the AN teaching cost and the AL teaching cost are the number of demonstrations and the accumulated time lengths of the demonstrations for the learner's hypothesis to reach $\phi^{\ast}$ in a specific teaching session, respectively. 
\begin{definition}                  
Given a teacher $\mathcal{T}$ and a learner $\mathcal{L}_{\sigma}$ in the teaching setting $\mathcal{S}=(\phi^{\ast}, \phi^0, \Phi, \dom(\Theta))$, we define the worst-case AN teaching cost and AL teaching cost as%\vspace{-3mm}
	    \[
		\begin{split}
		\textrm{AN-Cost}^{\textrm{WC}}_{\mathcal{S}}(\mathcal{T}, \mathcal{L}_{\sigma}):=&\max\limits_{\theta\in \dom(\Theta)}\textrm{AN-Cost}_{\mathcal{S}}(\mathcal{T}, \mathcal{L}_{\sigma}, \theta),\\
		\textrm{AL-Cost}^{\textrm{WC}}_{\mathcal{S}}(\mathcal{T}, \mathcal{L}_{\sigma}):=&\max\limits_{\theta\in \dom(\Theta)}\textrm{AL-Cost}_{\mathcal{S}}(\mathcal{T}, \mathcal{L}_{\sigma}. \theta).
		\end{split}
		\]
  \label{WCTC}
  \vspace{-3mm}
\end{definition} 

\begin{definition}              
Given a learner $\mathcal{L}_{\sigma}$ in the teaching setting $\mathcal{S}=(\phi^{\ast}, \phi^0, \Phi, \dom(\Theta))$, we define the \textit{AN teaching complexity} and \textit{AL teaching complexity} respectively as 
		\[
		\begin{split}
		\textrm{AN-Complexity}_{\mathcal{S}}(\mathcal{L}_{\sigma}):=&\min\limits_{\mathcal{T}}\textrm{AN-Cost}^{\textrm{WC}}_{\mathcal{S}}(\mathcal{T}, \mathcal{L}_{\sigma}),\\
		\textrm{AL-Complexity}_{\mathcal{S}}(\mathcal{L}_{\sigma}):=&\min\limits_{\mathcal{T}}\textrm{AL-Cost}^{\textrm{WC}}_{\mathcal{S}}(\mathcal{T}, \mathcal{L}_{\sigma}).
		\end{split}
		\]
		\label{complexity}
		\vspace{-3mm}
	\end{definition} 
Intuitively, the AN teaching complexity and the AL teaching complexity are the minimal number of demonstrations and the minimal accumulated time lengths of the demonstrations needed for the learner's hypothesis to reach $\phi^{\ast}$ despite the randomness of the environment, respectively. 

For example, we consider the hypothesis set $\Phi=\{F_{\le i}s\}$, where $i\in\{0,\dots,4\}$, and $s\in S=\{\clubsuit, \spadesuit, \blacklozenge\}$. If $\phi^{\ast}$ is $F_{\le 2}\clubsuit$, and the learner has uniform preference, then for any $\phi^0$, one sequence of demonstrations which can minimize both the (worst-case) 
AN and AL teaching costs\footnote{Note that there does not always exist a sequence of demonstrations that minimizes both the AN and AL teaching costs.} are firstly a negative demonstration of $\spadesuit, \blacklozenge, \spadesuit, \clubsuit, \clubsuit$, and then a positive demonstration of $\spadesuit, \blacklozenge,\clubsuit$. In this teaching setting, the AN and AL teaching complexities are 2 and 8, respectively.

\subsection{TLIP: Teaching pLTL$_f$ Formulas with Integer Programming} 
\label{sec_TLIP}
Finding the optimal sequence of demonstrations with minimal AN or AL teaching cost has time complexity in the order of $2^{\vert D_{L_{\textrm{max}}}\vert}$, where $D_{L_{\textrm{max}}}$ is the set of all possible demonstrations with length at most $L_{\textrm{max}}$. As $\vert D_{L_{\textrm{max}}}\vert=\sum_{i=1}^{L_{\textrm{max}}}\vert S\vert^{L_{\textrm{max}}}$, $2^{\vert D_{L_{\textrm{max}}}\vert}$ is doubly exponential with the maximal length of the demonstrations.

% Specifically, we compute each labeled demonstration $(\rho_L, l)$ such that the ratio of the number of hypothesis pLTL$_f$ formulas in the version space which are strongly inconsistent with $(\rho_L, l)$ to the length $L$ is maximal.

%To reduce computation, w
We resort to greedy methods for \textit{myopic teaching} with near-optimal performance. To compute the greedy solution, we first derive a necessary condition through Definition \ref{minimal_length} and Theorem \ref{gamma0} for the minimal time length of a demonstration so that a set of pLTL$_f$ formulas are strongly inconsistent with (thus can be eliminated by) this demonstration.
	
\begin{definition}   
We define the \textit{minimal time length} $\zeta(\phi,l)$ of a pLTL$_f$ formula $\phi$ with respect to a label $l$ recursively as 
		\[ 
		\begin{split}
		\zeta(\pi,l)=& 0,~~~~~~
		\zeta(\lnot\phi, l) = \zeta(\phi,-l),\\
		\zeta(\phi_1\wedge\phi_2, l) =& 
		\begin{cases}
		\max\{\zeta(\phi_1, l), \zeta(\phi_2, l)\},& \mbox{if $l=1$};\\  
		\min\{\zeta(\phi_1, l), \zeta(\phi_2, l)\},& \mbox{if $l=-1$},
		\end{cases}\\
		\zeta(F_{\le\tau}\phi, l) =&
		\begin{cases}
		\zeta(\phi, l),~~~~~~~~~~~~~~~~~~~~~~\mbox{if $l=1$};\\  
		\zeta(\phi, l)+\tau,~~~~~~~~~~~~~~~\mbox{if $l=-1$},
		\end{cases}\\	
		\zeta(G_{\le \tau}\phi, l) =&
		\begin{cases}
		\zeta(\phi, l)+\tau,~~~~~~~~~~~~~~~~\mbox{if $l=1$};\\  
		\zeta(\phi, l),~~~~~~~~~~~~~~~~~~~~~~~\mbox{if $l=-1$}.                                                                                                                   
		\end{cases}
		\end{split}                                   
		\]
		\label{minimal_length}
\end{definition} 

	\begin{theorem}
		Given a target hypothesis $\phi^{\ast}$ and the hypothesis set $\Phi$, if a demonstration $(\rho_{L}, l)$ is strongly inconsistent with a subset $\hat{\Phi}=\{\phi_{i}\}^{\hat{N}}_{i=1}\subset\Phi$ of pLTL$_f$ formulas, then
		
		\[
		L\ge \max\{\zeta(\phi^{\ast}, l), \max\limits_{1\le i\le \hat{N}}\zeta(\phi_i, -l)\}.
		\]
		\label{gamma0}                 	    	 
	\end{theorem}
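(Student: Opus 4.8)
The plan is to prove this necessary condition by first \emph{strengthening} it into a single inductive claim that ranges over an arbitrary starting time, and then specializing to time $0$. Concretely, I would establish the auxiliary lemma: for every pLTL$_f$ formula $\phi$, every label $l\in\{-1,1\}$, every trajectory $\rho_L$, and every time index $t\ge 0$, if $(\rho_L,t)\models_{\rm{S}}\phi$ (in the case $l=1$) or $(\rho_L,t)\models_{\rm{S}}\lnot\phi$ (in the case $l=-1$), then $L-1\ge t+\zeta(\phi,l)$, with $\zeta$ as in Definition~\ref{minimal_length}. Generalizing over $t$ is essential rather than cosmetic: the temporal operators $F_{\le\tau}$ and $G_{\le\tau}$ reduce satisfaction at time $t$ to satisfaction of a subformula at a shifted time $t'\in[t,t+\tau]$, so an induction hypothesis pinned to $t=0$ would not close.

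The proof of the lemma is by structural induction on $\phi$, carried out simultaneously for both labels, since the strong semantics of negation switches to the weak view and back. The base case $\phi=\pi$ uses only that each of $(\rho_L,t)\models_{\rm{S}}\pi$ and $(\rho_L,t)\models_{\rm{W}}\pi$ forces $t\le L-1$, matching $\zeta(\pi,l)=0$. For $\lnot\phi$ I would unfold $(\rho_L,t)\models_{\rm{S}}\lnot\phi\iff(\rho_L,t)\not\models_{\rm{W}}\phi$ and $(\rho_L,t)\models_{\rm{S}}\lnot\lnot\phi\iff(\rho_L,t)\models_{\rm{S}}\phi$, which recovers $\zeta(\lnot\phi,l)=\zeta(\phi,-l)$ directly from the induction hypothesis. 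For $\phi_1\wedge\phi_2$ with $l=1$, strong satisfaction forces \emph{both} conjuncts at time $t$, so both induction bounds apply and the $\max$ in the definition of $\zeta$ is the correct combination; with $l=-1$, strong satisfaction of $\lnot(\phi_1\wedge\phi_2)$ only guarantees that \emph{at least one} conjunct is weakly violated (i.e. $\models_{\rm{S}}\lnot\phi_j$ for some $j$), so only the $\min$ bound survives, which is exactly the asymmetry the two branches of $\zeta(\phi_1\wedge\phi_2,l)$ encode.

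The temporal cases are the crux, and I expect them to be the main obstacle, because one must keep straight which quantifier permits picking the extremal shifted time. For $F_{\le\tau}\phi$ with $l=1$ and for $G_{\le\tau}\phi$ with $l=-1$, the satisfaction condition is \emph{existential} over $t'\in[t,t+\tau]$, so I can only exploit $t'\ge t$, yielding $L-1\ge t'+\zeta(\phi,l)\ge t+\zeta(\phi,l)$; hence no $\tau$ is added, matching $\zeta(F_{\le\tau}\phi,1)=\zeta(\phi,1)$ and $\zeta(G_{\le\tau}\phi,-1)=\zeta(\phi,-1)$. For $G_{\le\tau}\phi$ with $l=1$ and for $F_{\le\tau}\phi$ with $l=-1$, the condition is \emph{universal} over $t'\in[t,t+\tau]$ (in the latter after pushing the strong $\lnot$ through to the weak view), so I may instantiate the worst case $t'=t+\tau$ to get $L-1\ge t+\tau+\zeta(\phi,l)$, adding $\tau$ exactly as in $\zeta(G_{\le\tau}\phi,1)$ and $\zeta(F_{\le\tau}\phi,-1)$. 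Getting these four directions right is the only genuinely delicate part of the argument.

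Finally, to conclude the theorem I would specialize the lemma to $t=0$. By Definition~\ref{perfect}, strong inconsistency of $(\rho_L,l)$ with each $\phi_i\in\hat{\Phi}$ means $(\rho_L,0)\models_{\rm{S}}\phi^{\ast}$ together with $(\rho_L,0)\models_{\rm{S}}\lnot\phi_i$ when $l=1$, and the dual conditions when $l=-1$. Applying the lemma to $\phi^{\ast}$ with label $l$ and to each $\phi_i$ with label $-l$ gives $L-1\ge\zeta(\phi^{\ast},l)$ and $L-1\ge\zeta(\phi_i,-l)$ for every $i$; taking the maximum over these inequalities and relaxing $L-1\ge\,\cdot\,$ to $L\ge\,\cdot\,$ yields the claimed bound $L\ge\max\{\zeta(\phi^{\ast},l),\max_{1\le i\le\hat{N}}\zeta(\phi_i,-l)\}$ (in fact a slightly stronger version with an extra $+1$).
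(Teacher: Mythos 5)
Your proposal is correct and takes essentially the same route as the paper: the paper proves precisely the contrapositive of your auxiliary lemma (if $L < t+\zeta(\phi,l)$ then $(\rho_L,t)\models_{\rm{W}}\lnot\phi$ for $l=1$, resp.\ $(\rho_L,t)\models_{\rm{W}}\phi$ for $l=-1$) by the same structural induction with the same four existential/universal temporal-case distinctions, and then specializes to $t=0$ via Definition~\ref{perfect} to conclude by contradiction. Your direct formulation additionally yields the marginally sharper bound with the extra $+1$, but the substance of the argument is identical.
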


% with $c(\phi_j, \rho)=-c(\phi^{\ast}, \rho)\neq0$ 
%\setlength{\intextsep}{0pt} 
	\begin{algorithm}[!ht]
		\DontPrintSemicolon
		\SetKwBlock{Begin}{function}{end function}    
		{
			$\textbf{Input}$: hypothesis set $\Phi$, initial hypothesis $\phi^0$
		}  \\
		{
		Initialize $k\gets0$, $\tilde{\Phi}\gets\emptyset$, $\Phi^0\gets\Phi$
		} \\    
	\While{$\phi^k\neq\phi^{\ast}$}{
		\lIf{$\text{MyopicTeacher}=1$}
		{
		$\phi^{k\ast}=\phi^{\ast}$ 
		}     
		\lElse
	    {
	    Compute $\phi^{k\ast}\gets \text{Oracle}(\phi^k, \Phi^k, \phi^{\ast})$ 
	    }
	    \If{$\sigma$ is global}
		{
		 $\tilde{\Phi}\gets\{\phi\in\Phi^k:\sigma(\phi;\cdot)\le\sigma(\phi^{k\ast}; \cdot)\}$ \label{line_global}
		}
		\Else
	    {
	    $\tilde{\Phi}\gets\{\phi\in\Phi^k:\sigma(\phi;\phi')\le\sigma(\phi^{k\ast}; \phi')$ for some $\phi'\in\Phi^k\}$ \label{line_local}
	    }
		{
	    $(\rho^k, \ell^k), \hat{\Phi}\gets$ComputeDemonstration($\Phi^k$, $\tilde{\Phi}$, $\phi^{k\ast}$) \label{line_compute}
	    }\\
	{$\Phi^{k+1}\gets\Phi^k\setminus\hat{\Phi}$, $\tilde{\Phi}\gets\tilde{\Phi}\setminus\hat{\Phi}$, $k\gets k+1$ \label{line_eliminate}} \\ 
	    \If{$\text{AdaptiveTeacher}=0$}
		{
	        \lIf{$\tilde{\Phi}=\{\phi^{\ast}\}$}
	        {$\phi^k\gets\phi^{\ast}$}
	        \lElse{
		   Randomly select $\phi^k\neq\phi^{\ast}$ from $\tilde{\Phi}$}
             \label{line_select}
		} \lElse
		{ 
	Observe the learner's next hypothesis $\phi^{k}$
		}
		}
		{$K\gets k-1$}\\
	\Return{$\{(\rho^0, \ell^0), (\rho^1, \ell^1), \dots, (\rho^K, \ell^K)\}$} 
		\caption{Teaching of pLTL$_f$ Formulas with Integer Programming (TLIP)}                                          
		\label{compute_TLIP}
		%\vspace{-3mm}
	\end{algorithm}	
	%\setlength{\textfloatsep}{0pt}% Remove \textfloatsep
	
% We first derive a necessary condition for the minimal time length of demonstrations for eliminating pLTL formulas from the version space. 

    \begin{algorithm}
		\DontPrintSemicolon
		\SetKwBlock{Begin}{function}{end function}     
		{
			$\textbf{Input}: \Phi, \tilde{\Phi}$, $\phi^{\ast}$
		}  \\
		{
			Compute $\rho^{\ast}_{\textrm{pos}}$ and $\kappa(\rho^{\ast}_{\textrm{pos}})$ for $\textrm{IP}_{\textrm{pos}}(\tilde{\Phi}, \phi^{\ast})$ 
		} \\                         
		{
			Compute $\rho^{\ast}_{\textrm{neg}}$ and $\kappa(\rho^{\ast}_{\textrm{neg}})$ for $\textrm{IP}_{\textrm{neg}}(\tilde{\Phi}, \phi^{\ast})$

		}                       
		\If{$\kappa(\rho^{\ast}_{\textrm{pos}})\ge\kappa(\rho^{\ast}_{\textrm{neg}})$}
		{
			$(\rho, \ell)\gets (\rho^{\ast}_{\textrm{pos}}, 1)$,
			$\hat{\Phi}\gets\{\phi\in\Phi:c(\phi,\rho)=-1\}$
		} 
		\Else
		{
			$(\rho, \ell)\gets (\rho^{\ast}_{\textrm{neg}}, -1)$,
			$\hat{\Phi}\gets\{\phi\in\Phi:c(\phi,\rho)=1\}$
		}                          
		\Return{$(\rho, \ell)$, $\hat{\Phi}$} 
		\caption{ComputeDemonstration}                                           
		\label{compute}
	\end{algorithm}	

\looseness -1 \algref{compute_TLIP} shows the proposed TLIP approach for teaching pLTL$_f$ formulas to learners with preferences. Here we focus on the myopic solution ($\textit{MyopicTeacher}=1$) under the \textit{non-adaptive} setting ($\textit{AdaptiveTeacher}=0$) where the teacher does not observe the learner's current hypothesis and provides the sequence of demonstrations based on the learner's initial hypothesis. We compute $\tilde{\Phi}$ as either (i) the set of hypotheses that are preferred over the target hypothesis in the current version space if the learner has global preferences (Line \ref{line_global}), or (ii) the union of the sets of hypotheses that are preferred over target hypothesis based on each hypothesis in the current version space if the learner has local preferences (Line \ref{line_local}). Then we call \algref{compute} to compute the demonstrations that achieve the greedy myopic solution (Line \ref{line_compute}). 

% Initially, index $k$ is 0, $\Phi^k$ is set as the hypothesis set $\Phi$, $\phi^k$ is set as $\phi^0$, and $\mathcal{D}$ is set as the empty set.

Note that finding the greedy myopic solution 
via exhaustive search amounts to traversing the space of demonstrations, which is exponential with the maximal length of the demonstrations. We propose to find the greedy solution via a novel integer programming (IP) formulation. For a trajectory $\rho_L$ and a pLTL$_f$ formula $\phi$, we denote $c(\phi, \rho_L)=1$ if $(\rho_L,0)\models_{\rm{S}} \phi$; $c(\phi, \rho_L)=-1$ if $(\rho_L,0)\models_{\rm{S}} \lnot\phi$; and $c(\phi, \rho_L)=0$ if $(\rho_L,0)\not\models_{\rm{S}} \phi$ and $(\rho_L,0)\not\models_{\rm{S}} \lnot\phi$. For positive demonstrations, we compute the following integer programming problem 
$\textrm{IP}_{\textrm{pos}}(\tilde{\Phi}, \phi^{\ast})$.
\[      
\begin{split}
\max_{\rho_L} ~ & \kappa(\rho_L)                          \\
\text{subject to:} ~ & b_j\in \{0,1\}, \forall j, ~\mbox{s.t.}~ \phi_j\in\tilde{\Phi},~~ c(\phi^{\ast},\rho_L)= 1, \\
& c(\phi_j,\rho_L)= 1-2b_j, \forall j, ~\mbox{s.t.}~ \phi_j\in\tilde{\Phi}, \\
& L\ge\zeta(\phi^{\ast}, 1), L\ge b_j \zeta(\phi_j, -1), \forall j, ~\mbox{s.t.}~ \phi_j\in\tilde{\Phi},
\end{split}
\]
where $\kappa(\rho_L)=\big(\sum_{\phi_j\in\tilde{\Phi}}b_j\big)$ when optimizing for the AN teaching cost and $\kappa(\rho_L)=\big(\sum_{\phi_j\in\tilde{\Phi}}b_j\big) /L$ when optimizing for the AL teaching cost, the strong satisfaction or strong violation of a pLTL$_f$ formula $\phi$ by $\rho_L$ can be encoded as integer linear constraints of $\rho_L$, and the constraints for $L$ are obtained from Theorem \ref{gamma0}.  
In practice, the problem $\textrm{IP}_{\textrm{pos}}(\tilde{\Phi}, \phi^{\ast})$ can be efficiently solved by highly-optimized IP solvers \cite{gurobi}, %(cf \secref{implementation} for details)
which, as demonstrated in \secref{implementation}, is significantly more efficient than the exhaustive search method.  

For negative demonstrations, the integer programming problem $\textrm{IP}_{\textrm{neg}}(\tilde{\Phi}, \phi^{\ast})$ can be similarly formulated with the constraints $c(\phi^{\ast},\rho_L)= -1$ and $c(\phi_j,\rho_L)= -1+2b_j, \forall j, ~\mbox{s.t.}~ \phi_j\in\tilde{\Phi}$. We use $\rho^{\ast}_{\textrm{pos}}$ and $\rho^{\ast}_{\textrm{neg}}$ to denote the optimal positive and negative demonstration computed from $\textrm{IP}_{\textrm{pos}}(\tilde{\Phi}, \phi^{\ast})$ and $\textrm{IP}_{\textrm{neg}}(\tilde{\Phi}, \phi^{\ast})$, respectively. We select $\rho^{\ast}_{\textrm{pos}}$ or $\rho^{\ast}_{\textrm{neg}}$ depending on whether $\kappa(\rho^{\ast}_{\textrm{pos}})$ is no less than $\kappa(\rho^{\ast}_{\textrm{neg}})$ or not. Then, we eliminate the hypothesis pLTL$_f$ formulas that are strongly inconsistent with the selected demonstration (Line \ref{line_eliminate}). For non-adaptive teaching, we randomly select a pLTL$_f$ formula different from the target hypothesis (Line \ref{line_select}, as we consider the worst case) and perform another round of computation for the demonstration until the current hypothesis reaches the target hypothesis.

\subsection{Teaching with Positive Demonstrations Only} 	
Learning temporal logic formulas from positive demonstrations is a typical problem in temporal logic inference \cite{zhe_info,VazquezChanlatte2018LearningTS,NIPS2018Shah}. 

The algorithm for teaching pLTL$_f$ formulas with positive demonstrations only can be modified from Algorithms 1 and 2, by deleting Lines 3-7 of Algorithm 2 and obtaining $(\rho, l)$ as $(\rho^{\ast}_{\textrm{pos}},1)$. The following theorem provides a necessary condition for successfully teaching a pLTL$_f$ formula with positive demonstrations to a learner with global preferences. 
\begin{theorem} 
	Given a hypothesis set $\Phi$ and a sequence of positive demonstrations $D_{\textrm{p}}$, if a target hypothesis $\phi^{\ast}\in\Phi$ is \textit{teachable} from $D_{\textrm{p}}$ to a learner with global preference function $\sigma$, i.e., $\forall \phi\in\Phi(D)\setminus\{\phi^{\ast}\}, \sigma(\phi; \cdot)>\sigma(\phi^{\ast}; \cdot)$, then it holds that $\max\limits_{1\le k\le \vert D_{\textrm{p}}\vert}L_k\ge \max\limits_{\phi_i\in\tilde{\Phi}\setminus\{\phi^{\ast}\}}\zeta(\phi_i, -1)$, and that there does not exist a pLTL$_f$ formula $\phi'$ with $\sigma(\phi';\cdot)\le\sigma(\phi^{\ast};\cdot)$ such that $\phi^{\ast}\Rightarrow\phi'$. Here, $\tilde{\Phi}:=\{\phi\in\Phi:\sigma(\phi;\cdot)\le\sigma(\phi^{\ast}; \cdot)\}$, $L_k$ is the time length of the $k$-th demonstration in $D_{\textrm{p}}$, and $\zeta(\phi_i, -1)$ is as defined in Definition \ref{minimal_length}.
	\label{prop_pos}                 	    	            
\end{theorem}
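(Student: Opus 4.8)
The plan is to combine the characterization of \emph{teachability} with \thmref{gamma0}. Since the learner has a \emph{global} preference function I abbreviate $\sigma(\phi;\cdot)$ as $\sigma(\phi)$, and I first record two structural facts that drive both conclusions. (i) The target $\phi^{\ast}$ can never be strongly inconsistent with a positive demonstration: by \defref{perfect} with $l=1$ this would require $(\rho_L,0)\models_{\rm{S}}\phi^{\ast}$ and $(\rho_L,0)\models_{\rm{S}}\lnot\phi^{\ast}$ simultaneously, which is impossible, so $\phi^{\ast}\in\Phi(D_{\textrm{p}})$. (ii) Teachability, $\forall\phi\in\Phi(D_{\textrm{p}})\setminus\{\phi^{\ast}\},\ \sigma(\phi)>\sigma(\phi^{\ast})$, forces every hypothesis in $\tilde{\Phi}\setminus\{\phi^{\ast}\}$ to be eliminated by $D_{\textrm{p}}$; otherwise a surviving $\phi\in\tilde{\Phi}\setminus\{\phi^{\ast}\}$ would sit in the final version space with $\sigma(\phi)\le\sigma(\phi^{\ast})$, contradicting teachability.

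For the length bound I would fix any $\phi_i\in\tilde{\Phi}\setminus\{\phi^{\ast}\}$ and invoke fact (ii): some positive demonstration $(\rho^{k}_{L_k},1)\in D_{\textrm{p}}$ is strongly inconsistent with $\phi_i$. Applying \thmref{gamma0} with $l=1$ and the eliminated subset $\hat{\Phi}=\{\phi_i\}$ gives $L_k\ge\max\{\zeta(\phi^{\ast},1),\zeta(\phi_i,-1)\}\ge\zeta(\phi_i,-1)$, hence $\max_{k}L_k\ge\zeta(\phi_i,-1)$. Taking the maximum over all $\phi_i\in\tilde{\Phi}\setminus\{\phi^{\ast}\}$ yields $\max_{k}L_k\ge\max_{\phi_i\in\tilde{\Phi}\setminus\{\phi^{\ast}\}}\zeta(\phi_i,-1)$. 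This half is essentially a direct corollary of \thmref{gamma0}.

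For the non-implication claim I would argue by contradiction: suppose some $\phi'\ne\phi^{\ast}$ with $\sigma(\phi')\le\sigma(\phi^{\ast})$ (i.e.\ $\phi'\in\tilde{\Phi}\setminus\{\phi^{\ast}\}$) satisfies $\phi^{\ast}\Rightarrow\phi'$. By fact (ii), $\phi'$ must be eliminated, so there is a positive demonstration $(\rho_L,1)$ with $(\rho_L,0)\models_{\rm{S}}\phi^{\ast}$ and $(\rho_L,0)\models_{\rm{S}}\lnot\phi'$. I would then show these cannot coexist: from $(\rho_L,0)\models_{\rm{S}}\phi^{\ast}$ and $\phi^{\ast}\Rightarrow\phi'$ we get $(\rho_L,0)\models_{\rm{S}}\phi'$; invoking the monotonicity fact that strong satisfaction implies weak satisfaction, $(\rho_L,0)\models_{\rm{W}}\phi'$; but $(\rho_L,0)\models_{\rm{S}}\lnot\phi'$ is by \defref{strong} equivalent to $(\rho_L,0)\not\models_{\rm{W}}\phi'$, a contradiction. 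Thus $\phi'$ survives every positive demonstration, so $\phi'\in\Phi(D_{\textrm{p}})$, contradicting teachability. The degenerate case $\phi'=\phi^{\ast}$ is excluded as it corresponds to the target itself.

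The main obstacle is the last step's reliance on two semantic lemmas used implicitly. First, that strong satisfaction entails weak satisfaction for every pLTL$_f$ formula, which I would establish by structural induction on the syntax $\top\mid\pi\mid\lnot\phi\mid\phi_{1}\wedge\phi_{2}\mid G_{\le\tau}\phi\mid F_{\le\tau}\phi$, the $\lnot$ case using the strong/weak duality of \defref{strong}. Second, that the derived implication $\phi^{\ast}\Rightarrow\phi'$ transports strong satisfaction of $\phi^{\ast}$ to strong satisfaction of $\phi'$ at time $0$; pinning down the precise reading of $\Rightarrow$ (as $\lnot\phi^{\ast}\vee\phi'$ being valid, and what validity means across the strong and weak views) is the delicate point. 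Everything else reduces to bookkeeping with \defref{perfect} and \thmref{gamma0}.
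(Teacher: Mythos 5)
Your proposal is correct and follows essentially the same route as the paper's own proof: teachability forces every $\phi_i\in\tilde{\Phi}\setminus\{\phi^{\ast}\}$ to be strongly inconsistent with some demonstration, the length bound then follows from Theorem~\ref{gamma0}, and the implication claim is refuted by the same contradiction (a positive demonstration eliminating $\phi'$ must strongly satisfy $\phi^{\ast}$ and strongly violate $\phi'$, impossible under $\phi^{\ast}\Rightarrow\phi'$). You are in fact slightly more careful than the paper, which silently uses the facts you flag explicitly --- that strong satisfaction implies weak satisfaction (so $\models_{\rm{S}}\phi'$ and $\models_{\rm{S}}\lnot\phi'$ are mutually exclusive) and that the argument implicitly restricts $\phi'$ to the hypothesis set with $\phi'\neq\phi^{\ast}$.
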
  
% \begin{theorem} 
% 	Given a hypothesis set $\Phi$ and a sequence of positive demonstrations $D_{\textrm{p}}=\{\rho^1_{L_1}, \dots, \rho^{N}_{L_{N}}\}$, if a teacher can always teach a target hypothesis  $\phi^{\ast}\in\Phi$ from $D_{\textrm{p}}$ to learners with global preference function $\sigma$, then there does not exist a pLTL$_f$ formula $\phi$ with $\sigma(\cdot,\phi)<\sigma(\cdot,\phi^{\ast})$ such that $\phi^{\ast}\Rightarrow\phi$, and 
% 	$\max\limits_{1\le k\le N}L_k\ge \max\limits_{\phi_i\in\tilde{\Phi}\setminus\{\phi^{\ast}\}}\zeta(\phi_i, -1)$, where $\tilde{\Phi}:=\{\phi\in\Phi:\sigma(\phi;\cdot)\le\sigma(\phi^{\ast}; \cdot)\}$, $\zeta(\phi_i, -1)$ is as defined in Def. \ref{minimal_length}.
% 	\label{prop_pos}                 	    	            
% \end{theorem}  

\section{Adaptive Teaching of pLTL$_f$ Formulas}                                   
\label{sec_adaptive}
We now explore the theoretical aspects of machine teaching for pLTL$_f$ formulas under the adaptive setting. %when the teacher has access to the learner's current hypothesis (adaptivity), when the teacher provides intermediate target hypotheses (oracles) and when the teacher teaches with positive demonstrations only.

%In this section, we explore teaching of temporal logic formulas when the teacher has access to the learner's current hypothesis (adaptivity), when the teacher provides intermediate target hypotheses (oracles) and when the teacher teaches with positive demonstrations only.

% \subsection{Preference-based teaching} 

% \begin{definition}
% 	Given a target hypothesis $\phi^{\ast}$ and the hypothesis space $\Phi$ with global preferences, we define the \textit{AN preference-based teaching cost} and \textit{AL preference-based teaching cost} respectively as follows.\\
% 	$\textrm{PBTC}_{\textrm{N}}(\phi^{\ast}, \Phi):=\min\limits_{D}\vert D\vert$, s.t. $\forall \phi\in\Phi(D)\setminus\{\phi^{\ast}\}, \sigma(\phi; \cdot)>\sigma(\phi^{\ast}; \cdot)$, and\\
% 	$\textrm{PBTC}_{\textrm{L}}(\phi^{\ast}, \Phi):=\min\limits_{D}\displaystyle\sum_{1\le i\le N_D}L_i$, s.t. $\forall \phi\in\Phi(D)\setminus\{\phi^{\ast}\}, \sigma(\phi; \cdot)>\sigma(\phi^{\ast}; \cdot)$.
% \end{definition}

\subsection{Teaching Complexity} 
Different from non-adaptive teaching, an adaptive teacher observes the learner's current hypothesis and provides the next demonstration according to the target hypothesis, the current version space and the learner's current hypothesis. 
% While adaptive teaching cannot improve the performance for uniform and global preferences \cite{chen18adaptive}, adaptive teaching has an advantage for learners with local preferences in the presence of uncertainties, as it can adjust its demonstrations after observing the current hypothesis. 
%
% For example, if the current hypothesis $\phi=F_{\le 4}\clubsuit$, and the learner may have equal preference for $\phi_1=F_{\le 5}\clubsuit$ to $\phi_2=F_{\le 5}\spadesuit$, then the learner will randomly select $\phi_1$ or $\phi_2$.
%
% The AN and AL teaching costs and teaching complexities in the adaptive setting can be defined as in Def. \ref{cost} and \ref{complexity} except that $D_{\mathcal{S}}(\mathcal{T},\mathcal{L}_{\sigma}, \theta):=\big(\mathcal{T}^1(\phi^0, \Phi^0), \dots, \mathcal{T}^1(\phi^K, \Phi^K)\big)$,
% where $\Phi^0=\Phi$, $\phi^k\neq\phi^{\ast}, \forall k\in[0,K-1]$ and $\phi^K=\phi^{\ast}$.
\algref{compute_TLIP} with $\textit{AdaptiveTeacher}=1$ shows the procedure for adaptive teaching using TLIP. 

% Before each demonstration, the adaptive teaching algorithm computes the demonstration based on the current hypothesis of the learner.

The following theorem provides near-optimality guarantees under the adaptive myopic setting \cite{chen18adaptive}.

% \footnote{In the case of ties, we assume that the myopic teacher prefers demonstrations that make learner stay at the same hypothesis.} 

\begin{theorem}
	We denote the myopic adaptive teacher in TLIP as $\mathcal{T}^{\textrm{m}}$. Given a target hypothesis $\phi^{\ast}$ and the hypothesis set $\Phi$,
	then
	\[
    \begin{split}
    \textrm{AN-Cost}^{\textrm{WC}}_{\mathcal{S}}(\mathcal{T}^{\textrm{m}}, \mathcal{L}_{\sigma})\le\lambda(\log \vert\tilde{\Phi}_{\mathcal{S}}\vert+1)\textrm{AN-Complexity}_{\mathcal{S}}(\mathcal{L}_{\sigma}),\\
    \textrm{AL-Cost}^{\textrm{WC}}_{\mathcal{S}}(\mathcal{T}^{\textrm{m}}, \mathcal{L}_{\sigma})\le\lambda(\log \vert\tilde{\Phi}_{\mathcal{S}}\vert+1)\textrm{AL-Complexity}_{\mathcal{S}}(\mathcal{L}_{\sigma}),
    \end{split}
    \]
	where $\tilde{\Phi}_{\mathcal{S}}:=\{\phi\in\Phi:\sigma(\phi;\phi^0)\le\sigma(\phi^{\ast}; \phi^0)\}$, $\lambda=1$ if $\sigma$ is global, and $\lambda=2$ if $\sigma$ is local and the following two conditions are satisfied for both $\sigma$ and the sequence $D$ of demonstrations.  
	\[                                                                  
	\begin{split}
	\begin{cases}
	1. \forall \phi', \phi''\in \Phi, \sigma(\phi'; \phi)\le\sigma(\phi''; \phi)\le\sigma(\phi^{\ast}; \phi)\\
	~~~~~~~~~~~~~~~~~~\Rightarrow \sigma(\phi''; \phi')\le\sigma(\phi^{\ast}; \phi');     \\  
	2. \forall \Phi'\subset \bar{\Phi}(\{(\rho,l)\}), \exists (\rho',l')\in D, s.t., \bar{\Phi}(\{(\rho',l')\})=\Phi'. 
	\end{cases}\\	 
	\end{split}
	\]
	In Condition 2, $\bar{\Phi}(\{(\rho,l)\})$ denotes the set of hypotheses in $\Phi$ which are strongly inconsistent with demonstration $(\rho,l)$.
	\label{th_bound}     
	
\end{theorem}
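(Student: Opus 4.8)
The plan is to recognize Theorem~\ref{th_bound} as an instance of adaptive submodular (optimal-decision-tree) coverage and to reduce it to the greedy near-optimality machinery of \cite{chen18adaptive}. The key observation is that teaching succeeds exactly when every hypothesis in $\tilde{\Phi}_{\mathcal{S}}\setminus\{\phi^{\ast}\}$ has been eliminated from the version space, since under the preference model the learner's update can then only land on $\phi^{\ast}$. Each demonstration $(\rho,l)$ removes precisely the set $\bar{\Phi}(\{(\rho,l)\})$ of hypotheses strongly inconsistent with it (by Definition~\ref{perfect}), so the progress toward covering $\tilde{\Phi}_{\mathcal{S}}\setminus\{\phi^{\ast}\}$ is a coverage function, which is monotone and submodular. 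First I would make this reduction explicit: define an objective equal to the number of eliminated hypotheses, establish its (adaptive) submodularity and monotonicity, and verify that the myopic teacher $\mathcal{T}^{\textrm{m}}$---which selects the demonstration maximizing $\kappa$ via $\textrm{IP}_{\textrm{pos}}$/$\textrm{IP}_{\textrm{neg}}$ and then keeps the better of $\rho^{\ast}_{\textrm{pos}},\rho^{\ast}_{\textrm{neg}}$---implements exactly the greedy step of this coverage problem. Theorem~\ref{gamma0} enters here to certify that the length constraints in the IPs are the necessary (hence minimal) lengths, so the per-step AL contribution is never inflated relative to the optimum.

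Second, I would treat the AN and AL costs uniformly inside this framework. For AN every demonstration has unit cost and $\kappa=\sum_j b_j$ counts eliminated hypotheses, so the greedy rule is the classical maximum-coverage step and the submodular-cover guarantee yields the $(\log\vert\tilde{\Phi}_{\mathcal{S}}\vert+1)$ factor. For AL the demonstration carries length-weighted cost $L$ and the IP objective becomes the rate $\kappa=(\sum_j b_j)/L$, which is precisely the greedy rule for minimum-cost submodular cover and attains the same logarithmic factor relative to the optimal accumulated length. Because the worst case over the tie-breaking randomness $\theta$ is handled by the teacher observing the realized $\phi^k$, the guarantee is an adaptive one and applies against the adaptive optimum, giving the stated worst-case bounds. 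In the global-preference case this immediately gives $\lambda=1$: with $\sigma(\,\cdot\,;\phi)$ independent of $\phi$, the target set $\tilde{\Phi}_{\mathcal{S}}$ is fixed, the coverage problem is static, and the standard greedy analysis applies verbatim.

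The main work---and the step I expect to be the obstacle---is the local-preference case, where the set of hypotheses preferred over $\phi^{\ast}$ depends on the learner's current hypothesis $\phi$, so the coverage target is no longer static and the clean greedy analysis breaks. Here I would invoke the two stated conditions. Condition~1 is a preference-transitivity/consistency requirement: if from reference $\phi$ we have $\sigma(\phi';\phi)\le\sigma(\phi'';\phi)\le\sigma(\phi^{\ast};\phi)$, then $\phi''$ remains weakly preferred over $\phi^{\ast}$ from reference $\phi'$; this guarantees that a hypothesis targeted for elimination relative to one reference stays ``above'' $\phi^{\ast}$ relative to the plausible successor references, so the adaptively growing coverage target behaves monotonically and the greedy policy never wastes effort. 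Condition~2 is a realizability/closure requirement: every subset of some $\bar{\Phi}(\{(\rho,l)\})$ is itself realizable as $\bar{\Phi}(\{(\rho',l')\})$ for a demonstration in $D$, which ensures the greedy step always has access to the covering action the analysis assumes. Under both conditions I would run a charging argument against the optimal adaptive policy in which each unit of greedy progress is matched by optimum progress up to the union over the at-most-two reference hypotheses relevant at a step (the current $\phi^k$ and the intended next $\phi^{k\ast}$); this doubling is exactly where $\lambda=2$ enters. Combining the charging argument with the adaptive submodular-cover bound of \cite{chen18adaptive} then yields both the AN and AL inequalities.
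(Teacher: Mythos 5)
Your global-preference and AN/AL reductions do match the paper: there the myopic adaptive teacher is observed to coincide with greedy (respectively, length-weighted greedy) set cover over the fixed preferred set $\tilde{\Phi}_{\mathcal{S}}\setminus\{\phi^{\ast}\}$, which gives $\lambda=1$ via the standard $(\log\vert\tilde{\Phi}_{\mathcal{S}}\vert+1)$ guarantee. The genuine gap is in the local-preference case, which you correctly identify as the obstacle but then resolve with the wrong mechanism. The paper's proof rests on two concrete facts that your sketch never establishes. First, an invariant: under Conditions 1 and 2 the myopic teacher's demonstrations never add hypotheses into the preferred version space, so the preferred version space induced by the learner's evolving hypothesis $\phi^k$ stays equal to the one induced by $\phi^0$; this is what makes the moving target static and reduces the problem to covering $\tilde{\Phi}_{\mathcal{S}}\setminus\{\phi^{\ast}\}$ at all. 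You assert that Condition 1 makes the target ``behave monotonically,'' but this invariant has to be proven jointly with the greedy step analysis, not assumed. Second, a per-step half-gain lemma: each myopic demonstration removes at least half as many hypotheses from the (initial) preferred version space as the best possible demonstration would. Its proof is a three-case analysis in which Condition 2 is used in a very specific way: for any candidate demonstration $z$ that is strongly inconsistent with the learner's current hypothesis $\phi^k$, there exists a companion demonstration $z'$ eliminating exactly the same hypotheses except that it spares $\phi^k$, i.e.\ $\bar{\Phi}(\{z'\})=\bar{\Phi}(\{z\})\setminus\{\phi^k\}$. Choosing the consistent $z'$ costs at most one hypothesis of coverage per step, and since every useful greedy step removes at least one hypothesis before reaching $\phi^{\ast}$, the myopic gain is at least half the maximal gain; this ``$+1$ versus $\geq 1$'' comparison is the entire source of $\lambda=2$, after which Wolsey-type $2$-approximate (weighted) set cover gives the bound.

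Your proposed charging argument---matching greedy progress to optimum progress ``up to the union over the at-most-two reference hypotheses $\phi^k$ and $\phi^{k\ast}$''---does not correspond to anything that can close this gap: the intermediate targets $\phi^{k\ast}$ produced by the oracle play no role in Theorem~\ref{th_bound} (they belong to the later discussion of teaching with oracles, precisely for preferences that \emph{violate} Condition 1), and the factor $2$ has nothing to do with two reference hypotheses. Likewise, the blanket appeal to adaptive submodularity and the machinery of \cite{chen18adaptive} does not go through as stated: under local preferences the elimination objective with respect to the preferred version space is not adaptive submodular in general---the preferred set can in principle grow when the learner moves---and it is exactly the bespoke case analysis using Conditions 1 and 2 (Condition 1 to show the preferred set from $\phi^k$ is contained in the preferred set from the learner's next hypothesis, Condition 2 to construct $z'$) that substitutes for submodularity. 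A further small slip: Theorem~\ref{gamma0} is not needed to relate the IP length constraints to the optimum in this proof; the AL bound is obtained simply by running the same argument as weighted set cover with demonstration lengths as weights.
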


% :=\{\phi'\in\Phi: c(\phi^{\ast},\rho)=-c(\phi^{\ast},\rho)\neq0\}

% 	$\textrm{TC}^{m}_{\textrm{N}}(\phi^{\ast}, \Phi)$ and $\textrm{TC}^{m}_{\textrm{L}}(\phi^{\ast}, \Phi)$ are the AN teaching cost and AL teaching cost of the myopic teaching algorithm, respectively.
	
% As uniform preference and global preference satisfy conditions 1 and 2, Theorem \ref{complexity} shows that the myopic teaching algorithm can achieve near-optimal performance within logarithm factors. For local preference functions, conditions 1 and 2 are not necessarily satisfied, thus there is no theoretical guarantee for near-optimal performance. 
Theorem \ref{th_bound} shows that, under the adaptive myopic setting, TLIP can achieve near-optimal performance for global preferences and certain local preferences that satisfy Conditions 1 and 2. This motivates us to design intermediate target hypotheses as shown in \secref{sec_oracle}.
  
\subsection{Teaching with Oracles} 	
\label{sec_oracle}
For learners with local preferences, we can design intermediate target hypotheses so that Condition 1 of Theorem \ref{th_bound} can be satisfied for learning each target hypothesis. %We call such intermediate target hypotheses \textit{oracles}. 
In \algref{compute_TLIP}, we assume that we have access to an \emph{oracle}, i.e., $\text{Oracle}(\phi^k, \Phi^k, \phi^{\ast})$, which outputs an intermediate target hypothesis $\phi^{k\ast}$ at each step. 

% We refer to such scenario as teaching with oracles.

For example, if the learner's current hypothesis $\phi^{\textrm{c}}$ does not contain $\clubsuit$, then the learner prefers formulas with the same temporal operator as that in $\phi^{\textrm{c}}$; and if $\phi^{\textrm{c}}$ contains $\clubsuit$, then the learner prefers G-formulas than F-formulas. With the same temporal operator, the learner prefers formulas with $\spadesuit$ than formulas with $\clubsuit$, and prefers formulas with $\blacklozenge$ the least. Then, for $\phi^{\ast}=G_{\le 2}\clubsuit$, $\phi^1=F_{\le 3}\spadesuit$, $\phi^2=F_{\le 5}\clubsuit$, $\phi^3=F_{\le 3}\blacklozenge$,
we have $\sigma(\phi^2;\phi^1)<\sigma(\phi^3;\phi^1)<\sigma(\phi^{\ast};\phi^1)$, but $\sigma(\phi^{\ast};\phi^2)<\sigma(\phi^3;\phi^2)$. Therefore, Condition 1 of Theorem \ref{th_bound} does not hold. If the oracle outputs $\phi^2$ as an intermediate target hypothesis before the learner's hypothesis reaches $\phi^2$, then the teaching problem is decomposed into two subproblems, i.e., teaching before $\phi^2$ and after $\phi^2$, and both subproblems satisfy Condition 1 of Theorem \ref{th_bound}.

\section{Experiments}                                                           
\label{implementation}
In this section, we evaluate the proposed approach under different teaching settings.
%Due to space limitations, another case study in a robotic navigation scenario can be found at Dropbox link \url{http://bit.ly/2R7LMPj}.}. 
The hypothesis set of pLTL$_f$ formulas are listed in Table \ref{table_space}. The set $S$ of states is $\{0, 1, \dots, 10\}$. In each teaching session, we randomly select both the initial hypothesis and the target hypothesis from the hypothesis set. All results are averaged over 10 teaching sessions.
% We set the ground-truth formula as ${\phi}^{\ast}=F_{\le3}(x\le5)$. 
  \begin{table}[h]
  	\begin{center}
  	
  		\caption{Hypothesis set of pLTL$_f$ formulas ($a=5, 10, 15$ correspond to hypothesis sets of sizes 90, 180 and 270, respectively). } \label{table_space}
%  		\rowcolors{2}{gray!25}{white}
        \scalebox{1}{
  		\begin{tabular}{|c|p{0.5\textwidth}|} 
  			\hline
  			Category & Hypothesis pLTL$_f$ Formulas \\
  			\hline
  			F-formulas &$F_{\le1}(x\le1)$, \dots,$F_{\le1}(x\le9)$, \newline %$F_{\le2}(x\le1)$, \dots,$F_{\le2}(x\le9)$\newline 
  			\dots \newline 
  			$F_{\le a}(x\le1)$, \dots,$F_{\le a}(x\le9)$\\
  			\hline
  			G-formulas &$G_{\le1}(x\le1)$, \dots, $G_{\le1}(x\le9)$, \newline %$G_{\le2}(x\le1)$, \dots, $G_{\le2}(x\le9)$ \newline 
  			\dots \newline $G_{\le a}(x\le1)$, \dots, $G_{\le a}(x\le9)$\\
  			\hline
  		\end{tabular}
  		}
  	\end{center}
  \end{table}
% We consider the following three specific types of preferences:\\
% %\textbf{Uniform preferences}: the learner has equal preferences for any pLTL$_f$ formula in the hypothesis set.\\
% \textbf{Global preferences}: the learner prefers F-formulas than G-formulas, and with the same temporal operator the learner prefers smaller temporal parameter $i$ (as in $F_{\le i}(x\le 5)$). \\

%\vspace{0.2in}
\subsection{Teaching pLTL$_f$ Formulas under Global Preferences}\label{sec:exp:global}      
We first compare TLIP with the exhaustive search method for myopic teaching (ESMT).
Table \ref{computation_time} shows the computation time for TLIP using myopic teaching and ESMT (minimizing the AL teaching costs) for learners with uniform preferences, where timeout (TO) is 300 minutes. 
ESMT becomes intractable when the maximal length $L_{\textrm{max}}$ of the demonstrations reaches 10 or above, while TLIP maintains relatively short computation time with increasing $L_{\textrm{max}}$.

\begin{table}[h]
	\centering
	\caption{Computation time for computing the myopic solution.}
	\centering
	\scalebox{0.8}{
	\begin{tabular}{lllll}
		\toprule
		& $L_{\textrm{max}}=5$  & $L_{\textrm{max}}=10$ & $L_{\textrm{max}}=15$ \\
		\midrule
	TLIP       & 3.67 s & 5.29 s & 7.65 s \\ 
    ESMT        & 4.57 s & TO & TO \\
		\bottomrule
	\end{tabular}}
	\label{computation_time}  
	\vspace{-0.05in}
\end{table}  

As ESMT is not scalable, we implement the following four methods for comparison for myopic teaching performances.
\begin{itemize}\denselist
    \item \textbf{AN-TLIP}: TLIP for minimizing AN teaching cost.
    \item \textbf{AL-TLIP}: TLIP for minimizing AL teaching cost.
    \item \textbf{AN-RG}: randomized greedy algorithm for minimizing AN teaching cost. At each iteration, we greedily pick the demonstration with minimal AN teaching cost among a randomly selected subset of demonstrations. 
    \item \textbf{AL-RG}: same with AN-RG except that here we minimize the AL teaching cost.
\end{itemize}
\begin{figure}[!h]
	\centering
	\includegraphics[width=8cm]{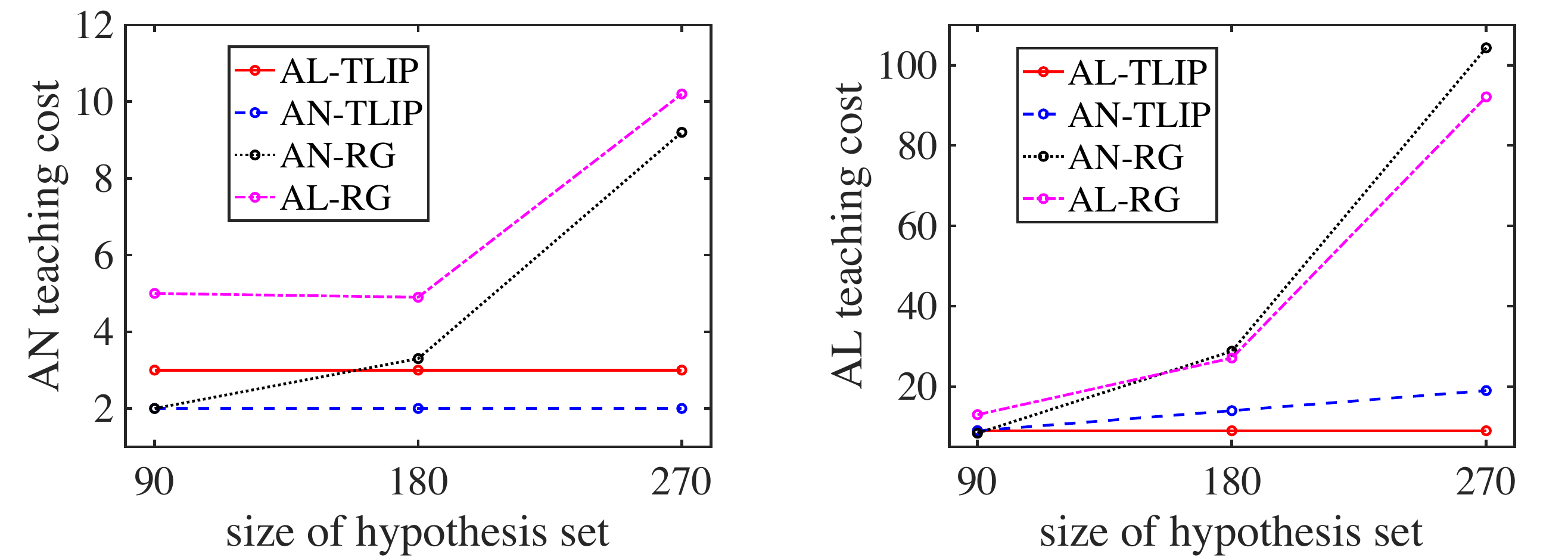}\caption{AN and AL teaching costs under global (uniform) preference with increasing sizes of the hypothesis set.}  
	\label{result1}
	\vspace{-2mm}
\end{figure}		

% The dashed (blue) line, solid (red) line, dotted (black) line and dash-dot (magenta) line indicate AN-TLIP, AL-TLIP, AN-RG and AL-RG, respectively.

\figref{result1} shows that AN-TLIP (resp. AL-TLIP) outperforms the other three methods % are the best for 
when minimizing the AN teaching costs (resp. the AL teaching costs). Specifically, the AN teaching costs using AN-TLIP are 50\%, 78.26\% and 80.39\% less than those using AL-TLIP, AN-RG and AL-RG, respectively. The AL teaching costs using AL-TLIP are 52.63\%, 91.37\% and 90.23\% less than those using AN-TLIP, AN-RG and AL-RG, respectively. \figref{result1} also shows that the growth of the AL teaching cost is more significant with the increasing size of the hypothesis set than that of the AN teaching costs.                               

\paragraph{Teaching with Positive Demonstrations Only} 	
We test the machine teaching algorithm with positive demonstrations only. We consider global preferences, where the learner prefers F-formulas than G-formulas, and with the same temporal operator the learner prefers formula $\phi_1$ than $\phi_2$ if and only if $\phi_1$ implies $\phi_2$. It can be shown that this preference function satisfies the condition of Theorem \ref{prop_pos}.  

\begin{figure}[th]
	\centering
	\includegraphics[width=8cm]{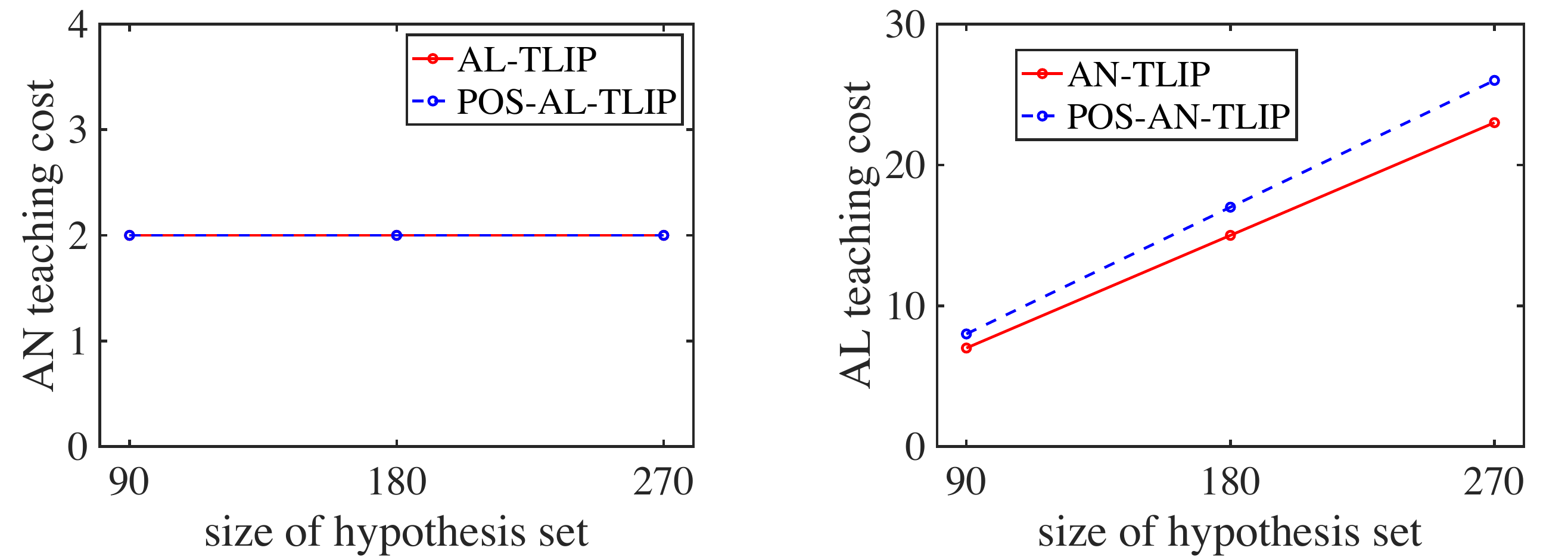}\caption{AN and AL teaching costs with positive demonstrations only and with both positive and negative demonstrations with increasing sizes of the hypothesis set. } 
	\label{result5}
	\vspace{-2mm}
\end{figure}
%\figref{result5} shows that the AN teaching costs of POS-AN-TLIP (i.e., AN-TLIP with positive demonstrations only) are the same as those of AN-TLIP, while the AL teaching costs of POS-AL-TLIP (i.e., AL-TLIP with positive demonstrations only) are up to 20\% more than those of AN-TLIP.
 The corresponding algorithms for AN-TLIP and AL-TLIP with positive demonstrations only are referred to as POS-AN-TLIP and POS-AL-TLIP, respectively. \figref{result5} shows that POS-AN-TLIP and POS-AL-TLIP do not incur much additional teaching cost (up to 20\% more) when restricted to only positive demonstrations.

% the AN and AL teaching costs of using positive demonstrations only are up to 20\% more than the teaching costs of using both positive and negative demonstrations.

% \subsection{Preference-Based Teaching} 
% \label{sec_case_pref}

% For example, if $\phi^{\ast}=G_{\le 5}(x\le 6)$, $\phi^1=F_{\le 3}(x\le 8)$, $\phi^2=F_{\le 3}(x\le 10)$, $\phi^3=F_{\le 3}(x\le 3)$,
% we have $\sigma(\phi^2;\phi^1)<\sigma(\phi^3;\phi^1)<\sigma(\phi^{\ast};\phi^1)$, but $\sigma(\phi^{\ast};\phi^2)<\sigma(\phi^3;\phi^2)$. Therefore, Condition 1 of Theorem \ref{th_bound} does not hold. By setting $\phi^2$ as an intermediate target hypothesis (oracle), the teaching before $\phi^2$ and after $\phi^2$ will both satisfy Condition 1.

% The computed demonstration is

% \begin{figure}[th]
% 	\centering
% 	\includegraphics[width=8cm]{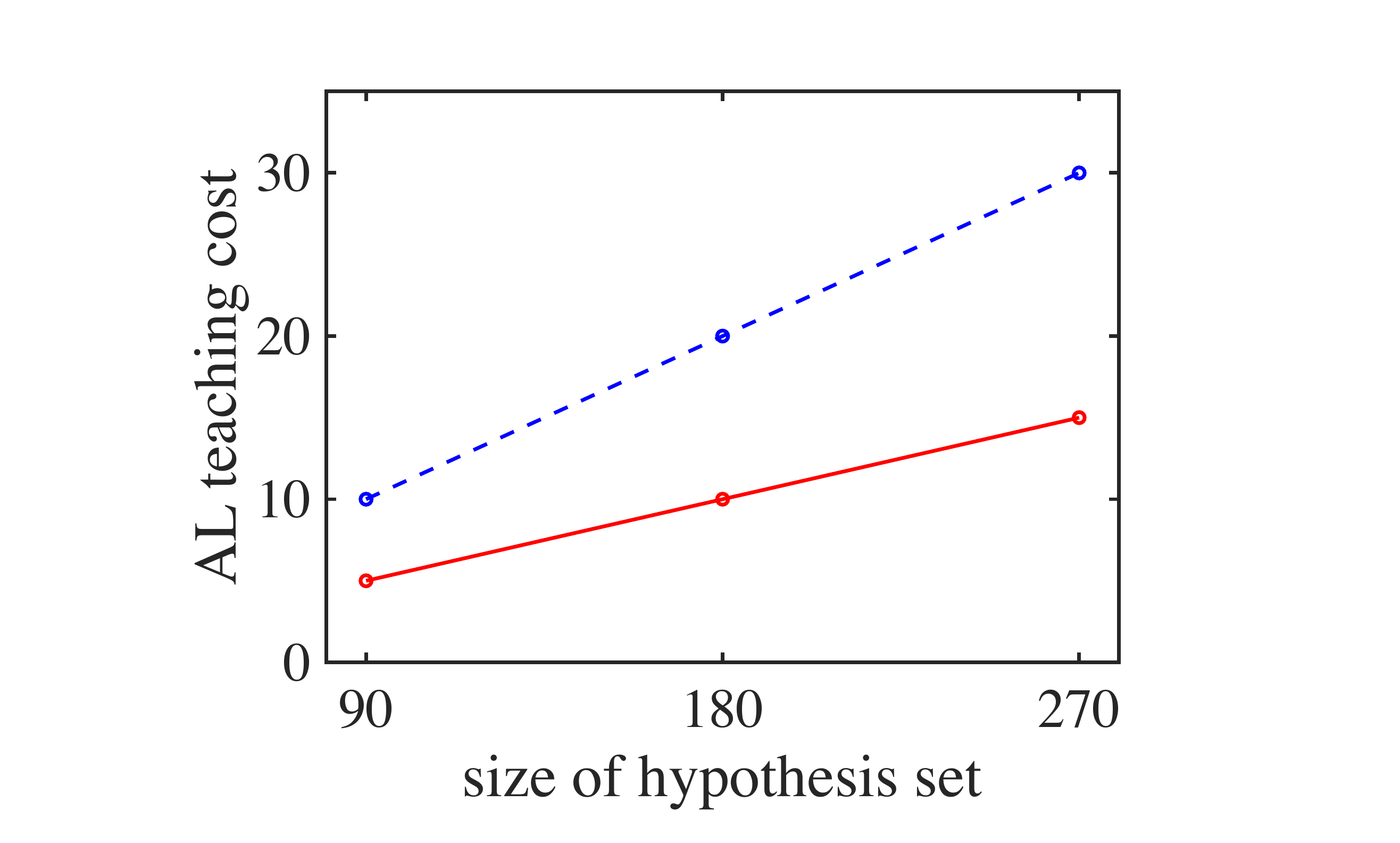}\caption{AN and AL teaching costs with uniform preferences and local preferences. The dashed line and solid line indicate uniform preferences and (non-uniform) global preferences, respectively.} 
% 	\label{result2}  
% \end{figure}		

% \figref{result2} shows the teaching costs with uniform preferences and local preferences, with increasing sizes of the hypothesis set.  

\subsection{Teaching pLTL$_f$ %Temporal Logic 
Formulas under Local Preferences}     	
	\begin{figure}[th]
		\centering
		\includegraphics[width=8cm]{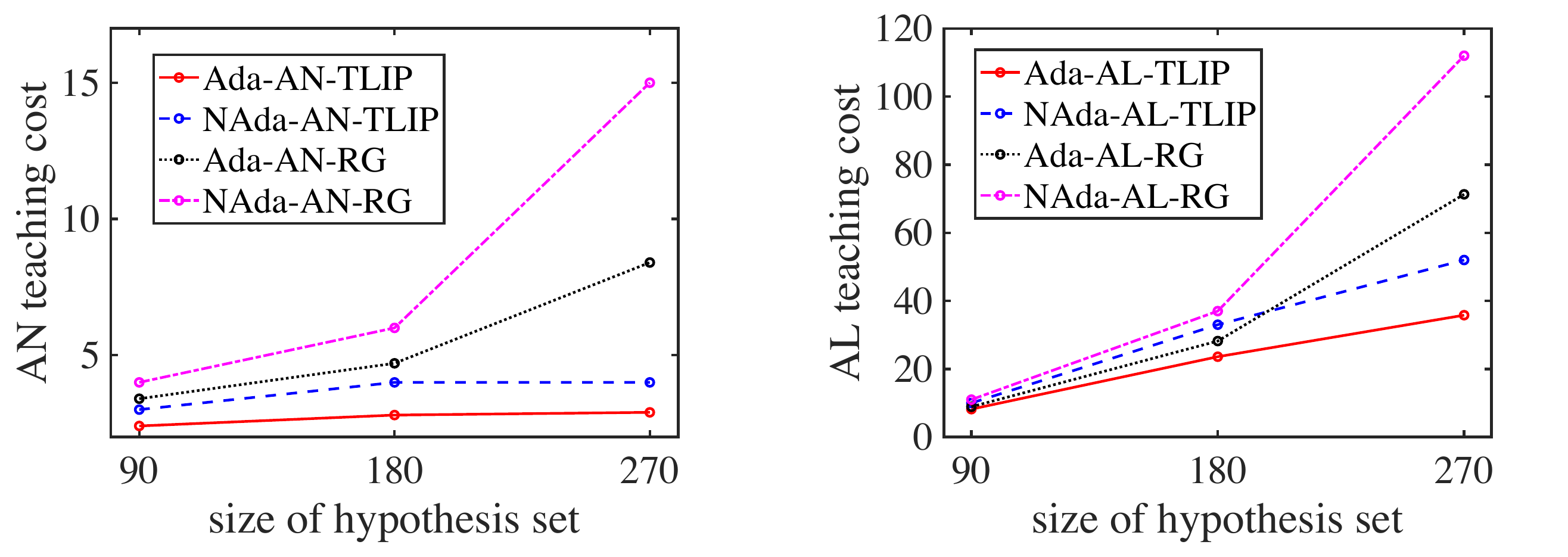}\caption{AN and AL teaching costs for adaptive and non-adaptive teaching with increasing sizes of the hypothesis set. %Ada-AN-TLIP, Ada-AL-TLIP, Ada-AN-RG and Ada-AL-RG represent AN-TLIP, AL-TLIP, AN-RG and AL-RG under the adaptive setting, respectively; NAda-AN-TLIP, NAda-AL-TLIP, Ada-AN-RG and Ada-AL-RG represent under the adaptive setting, respectively
		}
		\label{result3}
		\vspace{-2mm}
	\end{figure}		
% The dashed (blue) line, solid (red) line, dotted (black) line and dash-dot (magenta) line indicate NA-TLIP, Ada-TLIP, NA-RG and Ada-RG, respectively.
% For adaptive teaching, we compute the demonstrations based on the hypothesis pLTL$_f$ formulas of the learner throughout the process.
\looseness -1 \paragraph{Local preferences} For two pLTL$_f$ formulas $\phi_1=F_{\le i_1}(x\le v_1)$ and $\phi_2=F_{\le i_2}(x\le v_2)$, we define the Manhattan distance between $\phi_1$ and $\phi_2$ as $\vert i_1-i_2\vert + \vert v_1-v_2\vert$. We consider the following local
preference:\\
(1) The learner prefers formulas with the same temporal operator as that in the learner's previous hypothesis;\\
(2) With the same temporal operator the learner prefers formulas that are ``closer'' to the learner's previous hypothesis in terms of the Manhattan distance;\\
(3) The learner prefers G-formulas than F-formulas if the learner's current hypothesis are in the form of $F_{\le i}(x\le 0)$ or $F_{\le i}(x\le 10)$ ($i=1,\dots, a$). This is intuitively consistent with human's preferences to switch categories when the values reach certain boundary values. 

% \paragraph{Adaptive teaching under local preferences} 
\paragraph{Adaptive Teaching} To test the advantage of adaptive teaching, we compare it with non-adaptive teaching in the presence of uncertainties. We consider local preferences with added uncertainty noises. Specifically, the learner has equal preference of selecting the formulas in the version space that have the least Manhattan distance from the current hypothesis and also any formula that can be perturbed from these formulas in the version space (here ``perturb'' means adding or subtracting the parameters $i$ or $v$ by 1, e.g., as in $F_{\le i}(x\le v)$). 
% For example, if the formula in the version space that has the least Manhattan distance from the current hypothesis is $\phi_1=G_{\le6}(x\le5)$, while $\phi_2=G_{\le5}(x\le5)$, $\phi_3=G_{\le7}(x\le5)$ and $\phi_4=G_{\le6}(x\le4)$ are also in the version space, $\phi_5=G_{\le6}(x\le6)$ has been eliminated from the version space, then the learner has 25\% probability each of selecting $\phi_1$, $\phi_2$, $\phi_3$ or $\phi_4$. The non-adaptive teaching considers the worst-case scenario based on the initial hypothesis.

% the teaching costs with non-adaptive teaching and adaptive teaching, with increasing sizes of the hypothesis set. 

\figref{result3} shows that Ada-AN-TLIP (i.e., adaptive AN-TLIP) can reduce the AN teaching costs by up to 27.5\% compared with NAda-AN-TLIP (i.e., non-adaptive AN-TLIP), Ada-AN-RG (i.e., adaptive AN-RG) can reduce the AN teaching costs by up to 44\% compared with NAda-AN-RG (i.e., non-adaptive AN-RG), Ada-AL-TLIP (i.e., adaptive AL-TLIP) can reduce the AN teaching costs by up to 31.15\% compared with NAda-AL-TLIP (i.e., non-adaptive AL-TLIP), and Ada-AL-RG (i.e., adaptive AL-RG) can reduce the AN teaching costs by up to 36.34\% compared with NAda-AL-RG (i.e., non-adaptive AL-RG).

% \figref{result3} shows that Ada-AN-TLIP can reduce the AN teaching costs by 27.5\% compared with NAda-AN-TLIP, Ada-AN-RG can reduce the AN teaching costs by 44\% compared with NAda-AN-RG, Ada-AL-TLIP can reduce the AN teaching costs by 31.15\% compared with NAda-AL-TLIP, and Ada-AL-RG can reduce the AN teaching costs by 36.34\% compared with NAda-AL-RG.

\paragraph{Adaptive Teaching with Oracles}   	
\begin{figure}[th]
		\centering
\includegraphics[width=8cm]{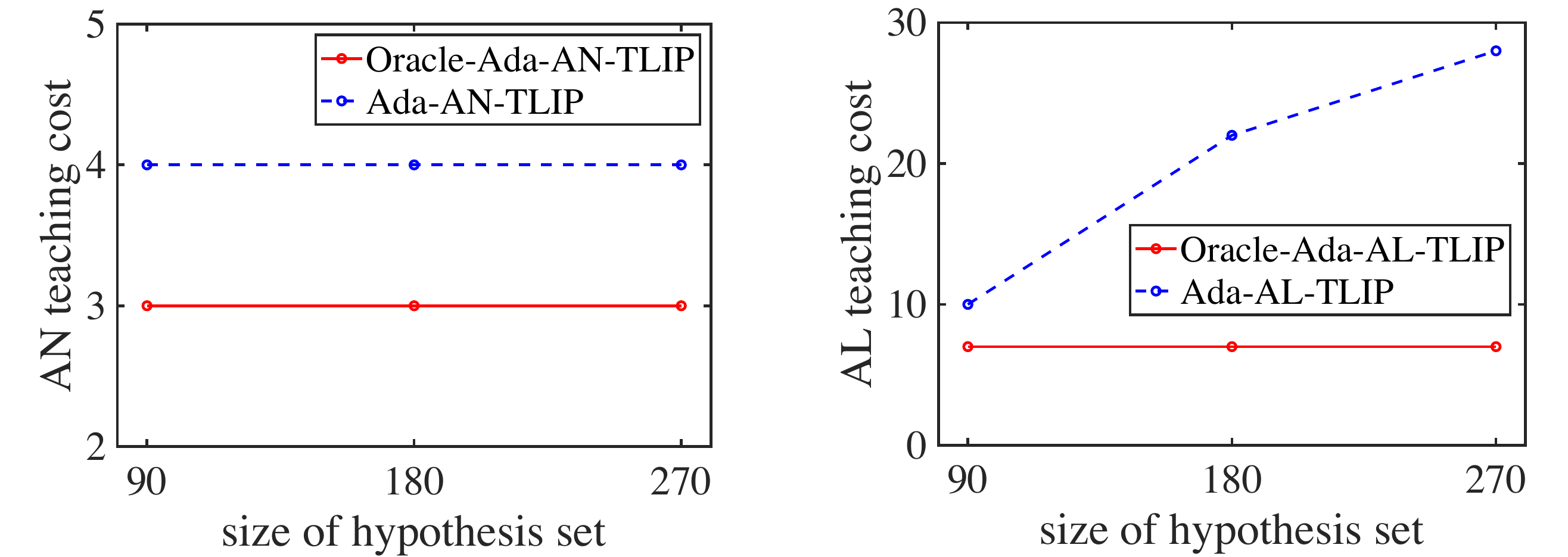}\caption{AN and AL teaching costs for adaptive teaching with oracles and without oracles with increasing sizes of the hypothesis set.} 
	\label{result4}
	\vspace{-2mm}
\end{figure}		
% The dashed line and solid line indicate TLIP with oracle and without oracle, respectively.
To decompose the teaching problem into subproblems that satisfy Condition 1 of Theorem \ref{th_bound}, we design the oracle which outputs an intermediate target hypothesis $F_{\le1}(x\le10)$\footnote{Theoretically, the oracle can be designed to output any formula of the form $F_{\le i}(x\le 0)$ or $F_{\le i}(x\le 10)$ ($i=1,\dots, a$).}. %can be selected as the oracle .
\figref{result4} shows that Oracle-Ada-AN-TLIP (adaptive AN-TLIP with oracles) can reduce the AN teaching costs by 25\% compared with Ada-AN-TLIP and Oracle-Ada-AL-TLIP (adaptive AL-TLIP with oracles) can reduce the AL teaching costs by up to 75\% compared with Ada-AL-TLIP.

% Theoretically, the oracle can output any formula in the form of $F_{\le i}(x\le 0)$ or $F_{\le i}(x\le 10)$ ($i=1,\dots, a$). %can be selected as the oracle .

\section{Conclusion}                                                                       
\label{conclusion}
We presented the first attempt for teaching parametric linear temporal logic formulas to a learner with preferences. We also explored how to more efficiently teach the learner utilizing adaptivity and oracles. The results show the effectiveness of the proposed approach. We believe this is an important step towards practical algorithms for teaching more complex concept classes in the real-world scenarios. %the first attempt of machine teaching of temporal logic formulas.
% The same methodology can be used in teaching other types of temporal logic formulas. 
For future work, we will explore teaching methods for more general forms of temporal logic formulas, with more specific learning algorithms for inferring temporal logic formulas. 

% For some cases, the learner could have a specific learning algorithm to infer temporal logic formulas from data. 

\bibliographystyle{IEEEtran}
\bibliography{zheref}

% Generated by IEEEtran.bst, version: 1.12 (2007/01/11)
\begin{thebibliography}{10}
\providecommand{\url}[1]{#1}
\csname url@samestyle\endcsname
\providecommand{\newblock}{\relax}
\providecommand{\bibinfo}[2]{#2}
\providecommand{\BIBentrySTDinterwordspacing}{\spaceskip=0pt\relax}
\providecommand{\BIBentryALTinterwordstretchfactor}{4}
\providecommand{\BIBentryALTinterwordspacing}{\spaceskip=\fontdimen2\font plus
\BIBentryALTinterwordstretchfactor\fontdimen3\font minus
  \fontdimen4\font\relax}
\providecommand{\BIBforeignlanguage}[2]{{%
\expandafter\ifx\csname l@#1\endcsname\relax
\typeout{** WARNING: IEEEtran.bst: No hyphenation pattern has been}%
\typeout{** loaded for the language `#1'. Using the pattern for}%
\typeout{** the default language instead.}%
\else
\language=\csname l@#1\endcsname
\fi
#2}}
\providecommand{\BIBdecl}{\relax}
\BIBdecl

\bibitem{Zhu2015}
\BIBentryALTinterwordspacing
X.~Zhu, ``Machine teaching: An inverse problem to machine learning and an
  approach toward optimal education,'' ser. AAAI, 2015, pp. 4083--4087.
  [Online]. Available: \url{http://dl.acm.org/citation.cfm?id=2888116.2888288}
\BIBentrySTDinterwordspacing

\bibitem{Kong2017TAC}
Z.~Kong, A.~Jones, and C.~Belta, ``Temporal logics for learning and detection
  of anomalous behavior,'' \emph{IEEE TAC}, vol.~62, no.~3, pp. 1210--1222,
  Mar. 2017.

\bibitem{VazquezChanlatte2018LearningTS}
M.~Vazquez-Chanlatte, S.~Jha, A.~Tiwari, M.~K. Ho, and S.~A. Seshia, ``Learning
  task specifications from demonstrations,'' in \emph{NeurIPS}, 2018, pp.
  5372--5382.

\bibitem{dasgupta2019teaching}
S.~Dasgupta, D.~Hsu, S.~Poulis, and X.~Zhu, ``Teaching a black-box learner,''
  in \emph{International Conference on Machine Learning}, 2019, pp. 1547--1555.

\bibitem{ma2019policy}
Y.~Ma, X.~Zhang, W.~Sun, and J.~Zhu, ``Policy poisoning in batch reinforcement
  learning and control,'' in \emph{Advances in Neural Information Processing
  Systems}, 2019, pp. 14\,543--14\,553.

\bibitem{goldman1995complexity}
S.~A. Goldman and M.~J. Kearns, ``On the complexity of teaching,''
  \emph{Journal of Computer and System Sciences}, vol.~50, no.~1, pp. 20--31,
  1995.

\bibitem{Pnueli}
A.~Pnueli, ``The temporal logic of programs,'' in \emph{Proc. 18th Annu. Symp.
  Found. Computer Sci.}, Washington, D.C., USA, 1977, pp. 46--57.

\bibitem{Kress2011}
H.~Kress-Gazit, T.~Wongpiromsarn, and U.~Topcu, ``Correct, reactive, high-level
  robot control,'' \emph{IEEE Robotics Automation Magazine}, vol.~18, no.~3,
  pp. 65--74, Sept 2011.

\bibitem{SonThanh_MTL}
\BIBentryALTinterwordspacing
S.~T. To, M.~Roberts, homas Apker, B.~Johnson, and D.~W. Aha, ``Mixed
  propositional metric temporal logic: A new formalism for temporal planning.''
  in \emph{AAAI Workshop on Planning for Hybrid Systems}.\hskip 1em plus 0.5em
  minus 0.4em\relax Phoenix, AZ: AAAI Press, 2015. [Online]. Available:
  \url{http://www.knexusresearch.com/wp-content/uploads/2016/03/AAAI16.pdf}
\BIBentrySTDinterwordspacing

\bibitem{pLTL2014}
S.~Chakraborty and J.-P. Katoen, ``Parametric {LTL} on {Markov} chains,'' in
  \emph{Theoretical Computer Science}.\hskip 1em plus 0.5em minus 0.4em\relax
  Berlin, Heidelberg: Springer Berlin Heidelberg, 2014, pp. 207--221.

\bibitem{pLTL}
R.~Alur, K.~Etessami, S.~La~Torre, and D.~Peled, ``Parametric temporal logic
  for {``Model Measuring''},'' \emph{ACM Trans. Comput. Logic}, vol.~2, no.~3,
  pp. 388--407, Jul. 2001.

\bibitem{mac2018teaching}
O.~Mac~Aodha, S.~Su, Y.~Chen, P.~Perona, and Y.~Yue, ``Teaching categories to
  human learners with visual explanations,'' in \emph{CVPR}, 2018, pp.
  3820--3828.

\bibitem{brown2019machine}
D.~S. Brown and S.~Niekum, ``Machine teaching for inverse reinforcement
  learning: Algorithms and applications,'' in \emph{Proceedings of the AAAI
  Conference on Artificial Intelligence}, vol.~33, 2019, pp. 7749--7758.

\bibitem{gao2017preference}
Z.~Gao, C.~Ries, H.~U. Simon, and S.~Zilles, ``Preference-based teaching,''
  \emph{The Journal of Machine Learning Research}, vol.~18, no.~1, pp.
  1012--1043, 2017.

\bibitem{chen18adaptive}
Y.~Chen, A.~Singla, O.~M. Aodha, P.~Perona, and Y.~Yue, ``Understanding the
  role of adaptivity in machine teaching: The case of version space learners,''
  in \emph{Proc. Conference on Neural Information Processing Systems (NIPS)},
  December 2018.

\bibitem{mansouri2019preference}
F.~Mansouri, Y.~Chen, A.~Vartanian, J.~Zhu, and A.~Singla, ``Preference-based
  batch and sequential teaching: Towards a unified view of models,'' in
  \emph{Advances in Neural Information Processing Systems}, 2019, pp.
  9195--9205.

\bibitem{zhu2013machine}
J.~Zhu, ``Machine teaching for bayesian learners in the exponential family,''
  in \emph{Advances in Neural Information Processing Systems}, 2013, pp.
  1905--1913.

\bibitem{liu2017iterative}
W.~Liu, B.~Dai, A.~Humayun, C.~Tay, C.~Yu, L.~B. Smith, J.~M. Rehg, and
  L.~Song, ``Iterative machine teaching,'' in \emph{Proceedings of the 34th
  International Conference on Machine Learning-Volume 70}.\hskip 1em plus 0.5em
  minus 0.4em\relax JMLR. org, 2017, pp. 2149--2158.

\bibitem{Hoxha2017}
\BIBentryALTinterwordspacing
B.~Hoxha, A.~Dokhanchi, and G.~Fainekos, ``Mining parametric temporal logic
  properties in model-based design for cyber-physical systems,''
  \emph{International Journal on Software Tools for Technology Transfer}, pp.
  79--93, Feb 2017. [Online]. Available:
  \url{http://dx.doi.org/10.1007/s10009-017-0447-4}
\BIBentrySTDinterwordspacing

\bibitem{Bombara2016}
G.~Bombara, C.-I. Vasile, F.~Penedo, H.~Yasuoka, and C.~Belta, ``A decision
  tree approach to data classification using signal temporal logic,'' in
  \emph{Proc. HSCC'16}, 2016, pp. 1--10.

\bibitem{Neider}
D.~Neider and I.~Gavran, ``Learning linear temporal properties,'' in
  \emph{Formal Methods in Computer Aided Design (FMCAD)}, 2018, pp. 1--10.

\bibitem{zhe_info}
\BIBentryALTinterwordspacing
Z.~Xu, M.~Ornik, A.~Julius, and U.~Topcu, ``Information-guided temporal logic
  inference with prior knowledge,'' in \emph{ACC}, 2019. [Online]. Available:
  \url{https://arxiv.org/abs/1811.08846}
\BIBentrySTDinterwordspacing

\bibitem{zhe_ijcai2019}
\BIBentryALTinterwordspacing
Z.~Xu and U.~Topcu, ``Transfer of temporal logic formulas in reinforcement
  learning,'' in \emph{Proc. IJCAI'2019}, 7 2019, pp. 4010--4018. [Online].
  Available: \url{https://doi.org/10.24963/ijcai.2019/557}
\BIBentrySTDinterwordspacing

\bibitem{zheletter2}
Z.~Xu, M.~Birtwistle, C.~Belta, and A.~Julius, ``A temporal logic inference
  approach for model discrimination,'' \emph{IEEE Life Sciences Letters},
  vol.~2, no.~3, pp. 19--22, Sept 2016.

\bibitem{zhe_advisory}
Z.~Xu, S.~Saha, B.~Hu, S.~Mishra, and A.~Julius, ``Advisory temporal logic
  inference and controller design for semi-autonomous robots,'' \emph{IEEE
  Trans. Autom. Sci. and Eng.}, may 2018, in press.

\bibitem{zhe2016}
Z.~Xu and A.~Julius, ``Census signal temporal logic inference for multiagent
  group behavior analysis,'' \emph{IEEE Trans. Autom. Sci. Eng.}, vol.~15,
  no.~1, pp. 264--277, Jan. 2018.

\bibitem{zhe_ADHS}
A.~J. Zhe~Xu, Calin~Belta, ``Temporal logic inference with prior information:
  An application to robot arm movements,'' in \emph{5th IFAC Conference on
  Analysis and Design of Hybrid Systems (ADHS)}, 2015.

\bibitem{NIPS2018Shah}
\BIBentryALTinterwordspacing
A.~Shah, P.~Kamath, J.~A. Shah, and S.~Li, ``Bayesian inference of temporal
  task specifications from demonstrations,'' in \emph{NeurIPS}.\hskip 1em plus
  0.5em minus 0.4em\relax Curran Associates, Inc., 2018, pp. 3808--3817.
  [Online]. Available:
  \url{http://papers.nips.cc/paper/7637-bayesian-inference-of-temporal-task-specifications-from-demonstrations.pdf}
\BIBentrySTDinterwordspacing

\bibitem{Eisner2003}
C.~Eisner, D.~Fisman, J.~Havlicek, Y.~Lustig, A.~McIsaac, and
  D.~Van~Campenhout, \emph{Reasoning with Temporal Logic on Truncated
  Paths}.\hskip 1em plus 0.5em minus 0.4em\relax Berlin, Heidelberg: Springer
  Berlin Heidelberg, 2003, pp. 27--39.

\bibitem{KupfermanVardi2001}
O.~Kupferman and M.~Y.~Vardi, ``Model checking of safety properties,''
  \emph{Form. Methods Syst. Des.}, vol.~19, no.~3, pp. 291--314, Oct. 2001.

\bibitem{Ho2014}
H.-M. Ho, J.~Ouaknine, and J.~Worrell, \emph{Online Monitoring of Metric
  Temporal Logic}.\hskip 1em plus 0.5em minus 0.4em\relax Cham: Springer
  International Publishing, 2014, pp. 178--192.

\bibitem{gurobi}
\BIBentryALTinterwordspacing
L.~Gurobi~Optimization, ``Gurobi optimizer reference manual,'' 2019. [Online].
  Available: \url{http://www.gurobi.com}
\BIBentrySTDinterwordspacing

\bibitem{wolsey1982analysis}
L.~A. Wolsey, ``An analysis of the greedy algorithm for the submodular set
  covering problem,'' \emph{Combinatorica}, vol.~2, no.~4, pp. 385--393, 1982.

\end{thebibliography}

\section*{APPENDIX}
\section{Teaching pLTL$_f$ Formulas to Learners with Global Preferences}
\subsection{Formulation of Optimal Teaching}

We formulate an integer programming problem for optimally teaching a pLTL$_f$ formula to a learner with global preferences. We define the \textit{preferred hypothesis set} as $\tilde{\Phi}:=\{\phi\in\Phi:\sigma(\phi;\cdot)\le\sigma(\phi^{\ast}; \cdot)\}$, i.e., the set of hypotheses that are more preferred by the learner than the target hypothesis.
For a trajectory $\rho_L$ and a pLTL$_f$ formula $\phi$, we denote $c(\phi, \rho_L)=1$ if $(\rho_L,0)\models_{\rm{S}} \phi$; $c(\phi, \rho_L)=-1$ if $(\rho_L,0)\models_{\rm{S}} \lnot\phi$; and $c(\phi, \rho_L)=0$ if $(\rho_L,0)\not\models_{\rm{S}} \phi$ and $(\rho_L,0)\not\models_{\rm{S}} \lnot\phi$.  
We denote the set of possible demonstrations with length at most $L_{\textrm{max}}$ as $D_{L_{\textrm{max}}}=\{\rho_{L_1}^1, \rho_{L_2}^2, \dots, \rho_{L_{\vert D_{L_{\textrm{max}}}\vert}}^{\vert D_{L_{\textrm{max}}}\vert}\}$.  One way to optimally solve the machine teaching problem is to enumerate all possible demonstrations in $D_{L_{\textrm{max}}}$ and solve  the following integer programming problem. 
\begin{align}
\begin{split}
\min\limits_{\xi_i} ~ & \sum\limits_{1\le i\le \vert D_{L_{\textrm{max}}}\vert}\chi                           \\
\text{subject to:} ~ & \xi_i\in \{0,1\}, \forall i\in\{1, 2, \dots, \vert D_{L_{\textrm{max}}}\vert\}, \\
& \sum_{k=1}^{\vert D_{L_{\textrm{max}}}\vert}b_{j,k}\gamma_j\xi^k\ge 1,  \forall j, \textrm{s.t.}, \phi_j\in\tilde{\Phi}, 
\end{split}
\label{MIP}               
\end{align} 
where $\chi=\xi_i$ for AN teaching costs and $\chi=\xi_iL_i$ (where $L_i$ is the length of the $i$-th demonstration in $D_{L_{\textrm{max}}}$) for AL teaching costs, $b_{j,k}=1$ if either $c(\phi^{\ast},\rho_{L_k}^k)= 1$ and $c(\phi^j,\rho_{L_k}^k)= -1$, or $c(\phi^{\ast},\rho_{L_k}^k)= -1$ and $c(\phi^j,\rho_{L_k}^k)= 1$, and $b_{j,k}=0$ otherwise. As $c(\phi^j,\rho_{L_k}^k)$ is known by evaluating the satisfaction of $\rho_{L_k}^k$ with respect to $\phi^j$, $b_{j,k}$ is also a known variable. Therefore, problem (\ref{MIP}) is an integer programming problem with the worst-case time complexity in the order of $2^{\vert D_{L_{\textrm{max}}}\vert}$. As $\vert D_{L_{\textrm{max}}}\vert=\sum_{i=1}^{L_{\textrm{max}}}\vert S\vert^{L_{\textrm{max}}}$, $2^{\vert D_{L_{\textrm{max}}}\vert}$ is doubly exponential with respect to the maximal length of the demonstrations; hence the optimization is intractable to solve. Therefore, we resort to greedy methods for myopic teaching with near-optimal performance in \secref{sec_TLIP}.

\subsection{Teachable Hypotheses}
For a global preference function $\sigma$, we say $\phi^{\ast}$ is \textit{teachable} from a sequence $D$ of demonstrations if and only if $\forall \phi\in\Phi(D)\setminus\{\phi^{\ast}\}, \sigma(\phi; \cdot)>\sigma(\phi^{\ast}; \cdot)$.

\begin{corollary}
	Given a target pLTL$_f$ formula $\phi^{\ast}$, the hypothesis set $\Phi=\{\phi_1, \phi_2, \dots, \phi_n\}$ and global preference function $\sigma$, if $\phi^{\ast}\in\Phi$ is teachable from a sequence of demonstrations $D=\{(\rho^1_{L_1}, l_1), (\rho^2_{L_2}, l_2), \dots, (\rho^{N_D}_{L_{N_D}}, l^{N_D}_{L_{N_D}})\}$, then 
	$\max\limits_{1\le k\le N_D}L_k\ge\max\limits_{\phi_i\in \tilde{\Phi}}\min\{\zeta(\phi_i, 1), \zeta(\phi_i, -1)\}$, where $\tilde{\Phi}:=\{\phi\in\Phi~\vert~\sigma(\phi;\cdot)\le\sigma(\phi^{\ast}; \cdot)\}$ is the preferred hypothesis set.
	\label{corollary}                 	    	 
\end{corollary}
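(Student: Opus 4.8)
The plan is to reduce the statement to a per-hypothesis length bound supplied by \thmref{gamma0}, and then to combine these bounds by taking maxima. First I would unpack the teachability hypothesis. By definition, $\phi^{\ast}$ being teachable from $D$ means that every formula surviving in the version space $\Phi(D)$ other than $\phi^{\ast}$ is strictly less preferred than $\phi^{\ast}$; equivalently, no formula of $\tilde{\Phi}\setminus\{\phi^{\ast}\}$ survives. Since $\Phi(D)$ consists exactly of the formulas that are \emph{not} strongly inconsistent with any demonstration in $D$, this is the same as saying that every $\phi_i\in\tilde{\Phi}\setminus\{\phi^{\ast}\}$ is strongly inconsistent with at least one demonstration $(\rho^{k}_{L_k},l_k)\in D$.

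Next I would bound the length of that eliminating demonstration. Fix $\phi_i\in\tilde{\Phi}\setminus\{\phi^{\ast}\}$ and let $(\rho^{k}_{L_k},l_k)$ be a demonstration with which it is strongly inconsistent. Applying \thmref{gamma0} with $\hat{\Phi}\supseteq\{\phi_i\}$ to this single demonstration yields $L_k\ge\zeta(\phi_i,-l_k)$. The key point is that teachability tells us $\phi_i$ is eliminated by \emph{some} demonstration but not \emph{which label} that demonstration carries; since $l_k\in\{1,-1\}$ we have $-l_k\in\{1,-1\}$, so in the worst case we can only assert $L_k\ge\zeta(\phi_i,-l_k)\ge\min\{\zeta(\phi_i,1),\zeta(\phi_i,-1)\}$. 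Hence $\max_{1\le k\le N_D}L_k\ge\min\{\zeta(\phi_i,1),\zeta(\phi_i,-1)\}$ for each such $\phi_i$.

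It remains to account for the term $\phi_i=\phi^{\ast}$ in the outer maximum, since $\phi^{\ast}\in\tilde{\Phi}$. Here I would invoke the other half of \thmref{gamma0}: for any demonstration $(\rho^{k}_{L_k},l_k)\in D$ we have $L_k\ge\zeta(\phi^{\ast},l_k)\ge\min\{\zeta(\phi^{\ast},1),\zeta(\phi^{\ast},-1)\}$, so, provided $D$ is nonempty (which holds since $\phi^0\neq\phi^{\ast}$ forces at least one demonstration), the bound holds for $\phi^{\ast}$ as well. Taking the maximum of the per-hypothesis bounds over all $\phi_i\in\tilde{\Phi}$ then gives $\max_{1\le k\le N_D}L_k\ge\max_{\phi_i\in\tilde{\Phi}}\min\{\zeta(\phi_i,1),\zeta(\phi_i,-1)\}$, as claimed.

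The only genuine obstacle is the second step: unlike \thmref{gamma0}, where the label $l$ of the demonstration is given and one obtains the sharp quantity $\zeta(\phi_i,-l)$, teachability leaves the eliminating label unspecified. This is precisely why the bound must weaken to the minimum over the two labels $\min\{\zeta(\phi_i,1),\zeta(\phi_i,-1)\}$, rather than a single $\zeta(\phi_i,-l)$; the remaining work is just bookkeeping with the version-space definition and the distributivity of $\max$ over the contributions of distinct hypotheses.
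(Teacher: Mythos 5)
Your proposal is correct and follows essentially the same route as the paper's proof: teachability forces every $\phi_i\in\tilde{\Phi}\setminus\{\phi^{\ast}\}$ to be strongly inconsistent with some demonstration, Theorem~\ref{gamma0} bounds each eliminating demonstration's length, and since the eliminating label is unknown the per-hypothesis bound weakens to $\min\{\zeta(\phi_i,1),\zeta(\phi_i,-1)\}$ before taking the maximum over $\tilde{\Phi}$. If anything, you are more explicit than the paper, which compresses the label-weakening step and the $\phi_i=\phi^{\ast}$ term into a single unjustified final inequality.
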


\noindent\textbf{Proof of Corollary \ref{corollary}}:\\
Each demonstration $\rho^k_{L_k}$ is strongly inconsistent with a subset $\hat{\Phi}_k$ of pLTL$_f$ formulas in $\Phi$ ($\hat{\Phi}_k$ may be empty). If $\phi^{\ast}\in\Phi$ is teachable from a sequence of demonstrations $D=\{(\rho^1_{L_1}, l_1), (\rho^2_{L_2}, l_2), \dots, (\rho^{N_D}_{L_{N_D}}, l^{N_D}_{L_{N_D}})\}$, then it holds that $\tilde{\Phi}\setminus\{\phi^{\ast}\}\subset\bigcup\limits_{k}\hat{\Phi}_k$. From Theorem \ref{gamma0}, for each $k$, we have $L_k\ge \max\{\zeta(\phi^{\ast}, l), \max\limits_{1\le i\le \vert\hat{\Phi}_k\vert}\zeta(\phi_i, -l)\}$. Therefore, we have
\begin{align}\nonumber
\begin{split}
\max\limits_{1\le k\le N_D}L_k&\ge\max\limits_{1\le k\le N_D}\max\{\zeta(\phi^{\ast}, l), \max\limits_{1\le i\le \vert\hat{\Phi}_k\vert}\zeta(\phi_i, -l)\}\\
&\ge\max\limits_{\phi_i\in \tilde{\Phi}}\min\{\zeta(\phi_i, 1), \zeta(\phi_i, -1)\}.  
\end{split}
\end{align}

\section{Theoretical Proofs}

\noindent\textbf{Proof of Theorem \ref{gamma0}}:\\
	To prove Theorem \ref{gamma0}, we first prove the following lemma.
		\begin{lemma} 
		For a demonstration $(\rho_{L}, l)$ and the minimal time length $\zeta(\phi,l)$, if $L< t+\zeta(\phi,l)$, then if $l=1$, $(\rho_{L},t)\models_{\rm{W}}\lnot\phi$; and if $l=-1$, $(\rho_{L},t)\models_{\rm{W}}\phi$.\\
		\label{gamma2}
	\end{lemma}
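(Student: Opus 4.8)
The plan is to prove Lemma~\ref{gamma2} by structural induction on the formula $\phi$, treating the two labels $l=1$ and $l=-1$ simultaneously at each step. The engine that makes the induction go through is the strong/weak duality built into Definitions~\ref{strong} and~\ref{weak}: for any subformula $\psi$ one has $(\rho_{L},t)\models_{\rm{W}}\lnot\psi$ iff $(\rho_{L},t)\not\models_{\rm{S}}\psi$, and $(\rho_{L},t)\models_{\rm{W}}\psi$ iff $(\rho_{L},t)\not\models_{\rm{S}}\lnot\psi$. Thus the claim reads uniformly as: whenever $L<t+\zeta(\phi,l)$, the $l$-labeled strong truth value of $\phi$ at $t$ is not yet determined, i.e.\ strong satisfaction (for $l=1$) or strong violation (for $l=-1$) fails.

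First I would dispatch the base case $\phi=\pi$. Since $\zeta(\pi,l)=0$, the hypothesis $L<t+0$ forces $t>L-1$, and the atomic clauses of Definitions~\ref{strong} and~\ref{weak} immediately give $(\rho_{L},t)\not\models_{\rm{S}}\pi$ and $(\rho_{L},t)\models_{\rm{W}}\pi$, which cover $l=1$ and $l=-1$ respectively. Next, the negation case is handled purely by the duality above together with the definitional identity $\zeta(\lnot\phi,l)=\zeta(\phi,-l)$: the assertion for $\lnot\phi$ at label $l$ is exactly the assertion for $\phi$ at label $-l$, so it follows from the induction hypothesis with the labels swapped, introducing no new content.

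For the remaining connectives I would use the following witness/coverage dichotomy, which is the heart of the argument. For $\phi_{1}\wedge\phi_{2}$ with $l=1$, $\zeta$ is a $\max$, so $L<t+\zeta(\phi_{i},1)$ holds for some $i$, and the induction hypothesis makes that conjunct strongly unsatisfied, hence so is the conjunction; for $l=-1$, $\zeta$ is a $\min$, so $L<t+\zeta(\phi_{i},-1)$ holds for \emph{both} $i$, and both conjuncts are weakly satisfied. For $F_{\le\tau}\phi$ with $l=1$ (and dually $G_{\le\tau}\phi$ with $l=-1$), where $\zeta$ carries no extra $\tau$, every $t'\in[t,t+\tau]$ satisfies $t'\ge t$, so $L<t+\zeta(\phi,1)\le t'+\zeta(\phi,1)$ lets the induction hypothesis kill strong satisfaction of $\phi$ at \emph{every} such $t'$, and the eventuality cannot be strongly satisfied. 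For $F_{\le\tau}\phi$ with $l=-1$ (and dually $G_{\le\tau}\phi$ with $l=1$), where $\zeta$ picks up the extra $+\tau$, I would instead use the single extreme witness $t'=t+\tau$: the bound $L<t+\zeta(\phi,-1)+\tau=t'+\zeta(\phi,-1)$ lets the induction hypothesis weakly satisfy $\phi$ at $t'$, discharging the existential in the weak semantics of $F$ (or, for $G$ with $l=1$, breaking the universal in its strong semantics at $t'$).

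The step I expect to require the most care is the bookkeeping for the four temporal sub-cases: matching each of the two clauses of Definition~\ref{minimal_length} for $F$ and $G$ to the correct quantifier (\emph{for all} $t'\ge t$ versus \emph{the single} extreme $t'=t+\tau$) and to the correct semantic view (strong on the $l=1$/negation side, weak on the $l=-1$ side). The duality and the reduction of negation remove all genuine asymmetry, so once this witness-versus-coverage split is set up correctly, each case collapses to a one-line application of the induction hypothesis.
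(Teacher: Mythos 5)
Your proposal is correct and follows essentially the same route as the paper's proof: structural induction on $\phi$ with the identical case analysis --- $\zeta(\pi,l)=0$ forcing $t>L-1$ at the base, label-swapping via $\zeta(\lnot\phi,l)=\zeta(\phi,-l)$ for negation, the $\max$/$\min$ split for conjunction, and the coverage-of-all-$t'\in[t,t+\tau]$ versus single-extreme-witness $t'=t+\tau$ dichotomy for the temporal operators. The only cosmetic difference is that you phrase each step uniformly through the strong/weak duality ($\models_{\rm{W}}\lnot\psi$ iff $\not\models_{\rm{S}}\psi$), where the paper instead passes through derived formulas such as $\lnot\phi_1\vee\lnot\phi_2$ and $G_{\le\tau}\lnot\phi$; the content is the same.
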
		
	
% \noindent\textbf{Proof of Lemma \ref{gamma2}}:
\begin{proof}	
	\noindent We use induction to prove Lemma \ref{gamma2}. \\
	(i) We first prove that Lemma \ref{gamma2} holds for atomic predicate $\pi$. As $\zeta(\pi,l)=0$, if $L<t+\zeta(\phi,l)$, then $L<t$, so according to Definition \ref{weak}, for $l=1$ and $l=-1$, $(\rho_{L},t)\models_{\rm{W}}\pi$ and $(\rho_{L},t)\models_{\rm{W}}\lnot\pi$, Lemma \ref{gamma2} trivially holds.
	
	(ii) We assume that Lemma \ref{gamma2} holds for $\phi$ and prove Lemma \ref{gamma2} holds for $\lnot\phi$. If Lemma \ref{gamma2} holds for $\phi$, then if $L<t+\zeta(\phi,l)$, we have for $l=1$, $(\rho_{L},t)\models_{\rm{W}}\lnot\phi$; for $l=-1$, $(\rho_{L},t)\models_{\rm{W}}\phi$. Thus if $L<t+\zeta(\phi,-l)$, we have for $l=-1$, $(\rho_{L},t)\models_{\rm{W}}\lnot\phi$; for $l=1$, $(\rho_{L},t)\models_{\rm{W}}\phi$.
	Therefore, 
	if $$L<t+\zeta(\lnot\phi,l)=t+\zeta(\phi, -l),$$ we have for $l=1$,  $(\rho_{L},t)\models_{\rm{W}}\phi$, i.e. $(\rho_{L},t)\models_{\rm{W}}\lnot(\lnot\phi)$; for $l=-1$, $(\rho_{L},t)\models_{\rm{W}}\lnot\phi$. Therefore, Lemma \ref{gamma2} holds for $\lnot\phi$.
	
	(iii) We assume that Lemma \ref{gamma2} holds for $\phi_1,\phi_2$ and prove Lemma \ref{gamma2} holds for $\phi_1\wedge\phi_2$ and $\phi_1\vee\phi_2$. 
	
	For $l=1$, if Lemma \ref{gamma2} holds for $\phi_1$ and $\phi_2$, then if $L<t+\zeta(\phi_1,l)$, we have $(\rho_{L},t)\models_{\rm{W}}\lnot\phi_1$; if $L<\zeta(\phi_2,l)$, we have $(\rho_{L},t)\models_{\rm{W}}\lnot\phi_2$. 
	
	If $$L<t+\max\{\zeta(\phi_1, l), \zeta(\phi_2, l)\},$$ then $L<t+\zeta(\phi_1, l)$ or $L<t+\zeta(\phi_2, l)$, thus $(\rho_{L},t)\models_{\rm{W}}\lnot\phi_1$ or $(\rho_{L},t)\models_{\rm{W}}\lnot\phi_2$. So we have $$(\rho_{L},t)\models_{\rm{W}}\lnot\phi_1\vee\lnot\phi_2,$$ hence we have $(\rho_{L},t)\models_{\rm{W}}\lnot(\phi_1\wedge\phi_2)$.  
	
	For $l=-1$, if Lemma \ref{gamma2} holds for $\phi_1$ and $\phi_2$, then if $L<t+\zeta(\phi_1,l)$, we have $(\rho_{L},t)\models_{\rm{W}}\phi_1$; if $L<t+\zeta(\phi_2,l)$, we have $(\rho_{L},t)\models_{\rm{W}}\phi_2$. If $L<t+\min\{\zeta(\phi_1, l), \zeta(\phi_2, l)\}$, then $L<t+\zeta(\phi_1, l)$ and $L<t+\zeta(\phi_2, l)$, thus $(\rho_{L},t)\models_{\rm{W}}\phi_1$ and $(\rho_{L},t)\models_{\rm{W}}\phi_2$. Therefore, we have $$(\rho_{L},t)\models_{\rm{W}}\phi_1\wedge\phi_2.$$ Therefore, Lemma \ref{gamma2} holds for $\phi_1\wedge\phi_2$.    		
	
	Similarly, it can be proved by induction that Lemma \ref{gamma2} holds for $\phi_1\vee\phi_2$. 	
	
	(iv) We assume that Lemma \ref{gamma2} holds for $\phi$ and prove Lemma \ref{gamma2} holds for $F_{\le\tau}\phi$ and $G_{\le\tau}\phi$.	
	
	For $l=1$, if $L<t+\zeta(\phi, l)+t_1$, then for any $t'\in[t, t+\tau]$, we have $L<\zeta(\phi, l)+t'$, so if Lemma \ref{gamma2} holds for $\phi$, then for any $t'\in[t, t+\tau]$, we have $(\rho_{L},t')\models_{\rm{W}}\lnot\phi$, thus $$(\rho_{L},t)\models_{\rm{W}}G_{\le\tau}\lnot\phi,$$ hence we have $(\rho_{L},t)\models_{\rm{W}}\lnot F_{\le\tau}\phi$. 
	
	For $l=-1$, if $L<t+\zeta(\phi, l)+\tau$, then there exists $t'\in[t, t+\tau]$ such that $L<\zeta(\phi, l)+t'$, so if Lemma \ref{gamma2} holds for $\phi$, we have $(\rho_{L},t')\models_{\rm{W}}\phi$, thus $$(\rho_{L},t)\models_{\rm{W}}F_{\le\tau}\phi.$$  Therefore, Lemma \ref{gamma2} holds for $F_{\le\tau}\phi$.    		
	
	Similarly, it can be proved by induction that Lemma \ref{gamma2} holds for $G_{\le\tau}\phi$.
	
	Therefore, it is proved by induction that Lemma \ref{gamma2} holds for any pLTL$_f$ formula $\phi$.
	\end{proof}
	Now we proceed to prove Theorem \ref{gamma0}. According to Definition \ref{perfect}, if a demonstration $(\rho_{L}, l)$ is strongly inconsistent with a subset $\hat{\Phi}=\{\phi_{i}\}^{\hat{N}}_{i=1}\subset\Phi$ of pLTL$_f$ formulas, then either $l=1$, $(\rho_{L},0)\models_{\rm{S}}\phi^{\ast}$, $(\rho_{L},0)\models_{\rm{S}}\lnot\phi_{i}$ for any $\phi_i\in\hat{\Phi}$, or $l=-1$,
	$(\rho_{L},0)\models_{\rm{S}}\lnot\phi^{\ast}$, $(\rho_{L},0)\models_{\rm{S}}\phi_{i}$ for any $\phi_i\in\hat{\Phi}$. 
	
    From Lemma \ref{gamma2}, if $L<\zeta(\phi^{\ast}, l)$, then 
    either $l=1$, $(\rho_{L},0)\models_{\rm{W}}\lnot\phi^{\ast}$ holds, i.e., $(\rho_{L},0)\models_{\rm{S}}\phi^{\ast}$ does not hold, or $l=-1$, $(\rho_{L},0)\models_{\rm{W}}\phi^{\ast}$ holds, i.e., $(\rho_{L},0)\models_{\rm{S}}\lnot\phi^{\ast}$ does not hold. Therefore, the demonstration $(\rho_{L}, l)$ is not strongly inconsistent with a subset $\hat{\Phi}=\{\phi_{i}\}^{\hat{N}}_{i=1}\subset\Phi$ of pLTL$_f$ formulas. Contradiction! Hence we have $L\ge\zeta(\phi^{\ast}, l)$.

    Similarly, from Lemma \ref{gamma2}, for any $\phi_i\in\hat{\Phi}$, if $L<\zeta(\phi_i, -l)$, then 
    either $l=1$, $(\rho_{L},0)\models_{\rm{W}}\phi_i$ holds, i.e., $(\rho_{L},0)\models_{\rm{S}}\lnot\phi_i$ does not hold, or $l=-1$, $(\rho_{L},0)\models_{\rm{W}}\lnot\phi_i$ holds, i.e., $(\rho_{L},0)\models_{\rm{S}}\phi_i$ does not hold. Therefore, the demonstration $(\rho_{L}, l)$ is not strongly inconsistent with a subset $\hat{\Phi}=\{\phi_{i}\}^{\hat{N}}_{i=1}\subset\Phi$ of pLTL$_f$ formulas. Contradiction! Hence we have $L\ge\zeta(\phi_i, l)$ for any $\phi_i\in\hat{\Phi}$.
    
    Therefore, we have $L\ge\zeta(\phi^{\ast}, l)$, and $L\ge\zeta(\phi_i, l)$ for any $\phi_i\in\hat{\Phi}$, hence $L\ge \max\{\zeta(\phi^{\ast}, l), \max\limits_{1\le i\le \hat{N}}\zeta(\phi_i, -l)\}$.\\

\noindent\textbf{Proof of Theorem \ref{prop_pos}}:\\
If there exists a pLTL$_f$ formula $\phi$ such that $\phi^{\ast}\Rightarrow\phi'$ and $\sigma(\cdot,\phi')<\sigma(\cdot,\phi^{\ast})$, then for any demonstration $(\rho_L, 1)$, if $(\rho_L, 0)\models_{\rm{S}}\phi^{\ast}$, then $(\rho_L, 0)\models_{\rm{S}}\phi'$. However, in order to teach $\phi^{\ast}$ using a sequence $D_{\textrm{p}}$ of positive demonstrations, there should be one demonstration $(\rho'_L, 1)$ that is strongly inconsistent with $\phi'$, as  
$\sigma(\cdot,\phi')<\sigma(\cdot,\phi^{\ast})$. Therefore, $(\rho'_L, 0)\models_{\rm{S}}\lnot\phi'$. Contradiction!\\

For a sequence of demonstrations $D_{\textrm{p}}=\{(\rho^1_{L_1}, 1), (\rho^2_{L_2}, 1), \dots, (\rho^{N_D}_{L_{N_D}}, 1)\}$, each demonstration $\rho^k_{L_k}$ is strongly inconsistent with a subset $\hat{\Phi}_k$ of pLTL$_f$ formulas in $\Phi$ ($\hat{\Phi}_k$ may be empty). If $\phi^{\ast}\in\Phi$ is teachable from  $D_{\textrm{p}}$, then it holds that $\tilde{\Phi}\setminus\{\phi^{\ast}\}\subset\bigcup\limits_{k}\hat{\Phi}_k$. From Theorem \ref{gamma0}, for each $k$, we have $L_k\ge \max\{\zeta(\phi^{\ast}, 1), \max\limits_{1\le i\le \vert\hat{\Phi}_k\vert}\zeta(\phi_i, -1)\}$. Therefore, we have
\begin{align}\nonumber
\begin{split}
\max\limits_{1\le k\le\vert D_{\textrm{p}}\vert}L_k&\ge\max\limits_{1\le k\le \vert D_{\textrm{p}}\vert}\max\{\zeta(\phi^{\ast}, 1), \max\limits_{1\le i\le \vert\hat{\Phi}_k\vert}\zeta(\phi_i, -1)\}\\
&\ge\max\{\zeta(\phi^{\ast}, 1), \max\limits_{\phi_i\in\tilde{\Phi}\setminus\{\phi^{\ast}\}}\zeta(\phi_i, -1)\}\\
&\ge\max\limits_{\phi_i\in\tilde{\Phi}\setminus\{\phi^{\ast}\}}\zeta(\phi_i, -1).  
\end{split}
\end{align}

\noindent\textbf{Proof of Theorem \ref{th_bound}}:
\paragraph{Global preferences} When the preference function is global, the myopic adaptive TLIP algorithm reduces to a greedy set cover algorithm for minimizing the AN teaching cost. Therefore, we have
	\[
    \begin{split}
    \textrm{AN-Cost}^{\textrm{WC}}_{\mathcal{S}}(\mathcal{T}^{\textrm{m}},~& \mathcal{L}_{\sigma})\le\\
    &(\log \vert\tilde{\Phi}_{\mathcal{S}}\vert+1)\textrm{AN-Complexity}_{\mathcal{S}}(\mathcal{L}_{\sigma}).
    \end{split}
    \]
For minimizing the AL teaching cost, the myopic adaptive TLIP algorithm reduces to a greedy weighted set cover algorithm with the weights being the lengths of the demonstrations. Therefore, we have
	\[
    \begin{split}
    \textrm{AL-Cost}^{\textrm{WC}}_{\mathcal{S}}(\mathcal{T}^{\textrm{m}},~& \mathcal{L}_{\sigma})\le\\
    &(\log \vert\tilde{\Phi}_{\mathcal{S}}\vert+1)\textrm{AL-Complexity}_{\mathcal{S}}(\mathcal{L}_{\sigma}).
    \end{split}
    \]

% \clearpage
% \section{Proof of Theorem \ref{th_bound}}\label{app:proof:greedy:suff}
% % In this section, we provide the proof for Theorem \ref{th_bound}
% In this section, % we introduce useful notation and formally define the cost of a teaching algorithm. Then 
% we prove the upper bound of the greedy cost as presented in Theorem \ref{th_bound}.
% % Proof outline:
% % \begin{itemize}\denselist
% % \item Greedy picks a demonstration with the maximal reduction in $\futurecostapprox$
% % \item Reduction to setcover if the demonstration does not eliminate learner's current hypothesis $\hypothesis^k$?
% % \item It can only happen a limited number of times that a greedy demonstration removes $\hypothesis^k$.
% % \end{itemize}

\paragraph{Local preferences}  To be consistent with the notations used in the main paper, for any integer $k\in[1,K]$, we use $\phi^{k}$ to denote the learner's hypothesis formula after $k$ demonstrations, and $\Phi^k$ to denote the version space after $k$ demonstrations. 

% Let us use $\pi$ to denote an adaptive teaching algorithm and $\randomstate$ to denote the internal randomness of the learner. 
% We fix $\randomstate$, the preference $\ordering$, the target hypothesis $\hstar$, the learner's initial hypothesis $\hinit$, and the initial version space $\Hypotheses$. 
% We denote the learner's hypothesis after running $\pi^k$ (i.e., running $\pi$ for $t$ steps) as $\hypothesis(\pi^k, \randomstate, \ordering, \hstar, \hinit, \Hypotheses)$. To be consistent with the notations used in the main paper, we use $h^k$ as the shorthand notation for $\hypothesis(\pi^k, \randomstate, \ordering, \hstar, \hinit, \Hypotheses)$ whenever it is unambiguous.

After $k$ demonstrations, we define the \emph{preferred version space}
\begin{align}
  \prefset_{\ordering,\hstar}( \hypothesis^k, \Hypotheses^k) := \{\hypothesis' \in \Hypotheses^k: \orderingof{\hypothesis'}{\hypothesis^k} \leq \orderingof{\hstar}{\hypothesis^k}\}\label{eq:prefset}
\end{align}
to be the set of hypotheses in the current version space $\Hypotheses^k$ that are preferred over $\hstar$ (according to $\ordering$) from $\hypothesis^k$.

  Assume that the preference function $\ordering$ and the structure of tests satisfy Condition 1 and 2 from Theorem \ref{th_bound}. Suppose we have run the myopic adaptive TLIP algorithm for $m$ demonstrations. Let $\demonstration^k=(\rho^k_{L_k}, l^k)$ be the current demonstration, $\demonstrations^k = \{\demonstration^0, \dots, \demonstration^{k-1}\}$ be the sequence of demonstrations chosen by the greedy teacher up to $k$, and $\hypothesis^k$ be the learner's current hypothesis. For any given demonstration $\demonstration$, let $\hypothesis_z$ be the learner's next hypothesis assuming the teacher provides $\demonstration$. Then, we will prove the following inequality:
  \begin{align} 
    & |\prefset_{\ordering,\hstar}( \hypothesis^k,  \Hypotheses(\demonstrations^k))| - |\prefset_{\ordering,\hstar}( \hypothesis^{k+1},  \Hypotheses(\demonstrations^k \cup \{\demonstration^k\}))| \notag \\
    \geq ~&\frac12 \max_\demonstration \left( |\prefset_{\ordering,\hstar}( \hinit,  \Hypotheses(\demonstrations^k))|  - |\prefset_{\ordering,\hstar}( \hinit,  \Hypotheses(\demonstrations^k \cup \{\demonstration\}))| \right).
    \label{lemma_ineq}
  \end{align}
  % $\forall \demonstration, |\prefset_{\ordering,\hstar}( \hypothesis_m,  \Hypotheses(\demonstrations_m \cup \{\demonstration\})| \leq |\prefset(\uordering, \hstar, \hypothesis^0,  \Hypotheses(\demonstrations_m \cup \{\demonstration\})|$.
% \end{lemma}

% \begin{proof}
  The myopic adaptive TLIP algorithm picks the demonstration which leads to the smallest preferred version space. That is,
  \begin{align*}
    \demonstration^k = \argmin_\demonstration |\prefset_{\ordering,\hstar}( \hypothesis_z, \Hypotheses(\demonstrations^k \cup \{\demonstration\}))|.
  \end{align*}
  Here, $\hypothesis_z$ denotes the hypothesis that the learner takes if the learner observes demonstration $z$ after $k$ demonstrations. If $\demonstration$ is inconsistent
  with $\hypothesis^k$, then by Condition 2 of
  Theorem \ref{th_bound}, there exists a demonstration $\demonstration'$ which is
  consistent with $\hypothesis^k$ and only differs from $\demonstration$ at
  $\hypothesis^k$, i.e.,
  $\Hypotheses(\{\demonstration'\})\setminus \Hypotheses(\{\demonstration\}) =
  \{\hypothesis^k\}$.
   Then we have
    \begin{align}
    &  \prefset_{\ordering,\hstar}( \hypothesis^k, \Hypotheses(\demonstrations^k \cup \{\demonstration\})) \notag \\
    \stackrel{}{=}~&  \{\hypothesis' \in \Hypotheses(\demonstrations^k \cup \{\demonstration\}): \orderingof{\hypothesis'}{\hypothesis^k} \leq \orderingof{\hstar}{\hypothesis^k}\} \notag \\
    \stackrel{(a)}{=}~&  \{\hypothesis' \in\Hypotheses(\demonstrations^k \cup \{\demonstration\}): \orderingof{\hypothesis_z}{\hypothesis^k} \leq \orderingof{\hypothesis'}{\hypothesis^k} \leq \orderingof{\hstar}{\hypothesis^k}\} \notag \\
    \stackrel{(b)}{\subseteq}~&  \{\hypothesis' \in\Hypotheses(\demonstrations^k \cup \{\demonstration\}): \orderingof{\hypothesis'}{\hypothesis_z} \leq \orderingof{\hstar}{\hypothesis_z}\} \notag \\
    \stackrel{}{=}~&  \prefset_{\ordering,\hstar}( \hypothesis_z, \Hypotheses(\demonstrations^k \cup \{\demonstration\})), \label{eq:prefset:ht}
  \end{align}
  where step (a) follows from the fact that $\Hypotheses(\demonstrations^k \cup \{\demonstration\}) \cap \prefset_{\ordering,\hypothesis_z}(\hypothesis^k, \Hypotheses(\demonstrations^k)) = \hypothesis_z$, and step (b) follows from Condition~2 of Theorem \ref{th_bound}.
From (\ref{eq:prefset:ht}) we know $\prefset_{\ordering,\hstar}( \hypothesis^k, \Hypotheses(\demonstrations^k \cup \{\demonstration\})) \subseteq \prefset_{\ordering,\hstar}( \hypothesis_\demonstration, \Hypotheses(\demonstrations^k \cup \{\demonstration\}))$.
  % \begin{align*}
  %   \prefset_{\ordering,\hstar}( \hypothesis^k, \Hypotheses(\demonstrations^k \cup \{\demonstration\})) \subseteq \prefset_{\ordering,\hstar}( \hypothesis_\demonstration, \Hypotheses(\demonstrations^k \cup \{\demonstration\}))
  %   \label{eq:greedy_jump_extend}
  % \end{align*}
  Therefore,
  \begin{align}
    |\prefset_{\ordering,\hstar}( \hypothesis^k, \Hypotheses(\demonstrations^k \cup \{\demonstration'\}))| - 1 &=
    |\prefset_{\ordering,\hstar}( \hypothesis^k, \Hypotheses(\demonstrations^k \cup \{\demonstration\}))| \notag\\&\leq |
    \prefset_{\ordering,\hstar}( \hypothesis_\demonstration, \Hypotheses(\demonstrations^k \cup \{\demonstration\}))|,\label{eq:hzvshzp}
  \end{align}
  which gives us $|\prefset_{\ordering,\hstar}( \hypothesis^k, \Hypotheses(\demonstrations^k \cup \{\demonstration'\}))| \leq
  \prefset_{\ordering,\hstar}( \hypothesis_\demonstration, \Hypotheses(\demonstrations^k \cup \{\demonstration\}))| + 1$.

  We consider the following three cases.
  \begin{enumerate}[I.]
  \item \label{enum:suff:case1} $  |\prefset_{\ordering,\hstar}( \hypothesis^k, \Hypotheses(\demonstrations^k \cup \{\demonstration'\}))| =
    |\prefset_{\ordering,\hstar}( \hypothesis_\demonstration, \Hypotheses(\demonstrations^k \cup \{\demonstration\}))| + 1$. Then, by Eq.~\eqref{eq:hzvshzp}, we have
    $$\prefset_{\ordering,\hstar}( \hypothesis^k, \Hypotheses(\demonstrations^k \cup \{\demonstration\})) = \prefset_{\ordering,\hstar}( \hypothesis_\demonstration, \Hypotheses(\demonstrations^k \cup \{\demonstration\})).$$
    That is, even the demonstration $\demonstration$ can bring the learner to a new hypothesis $\hypothesis_\demonstration$, it does \emph{not} introduce new hypotheses into the preferred version space.

  \item \label{enum:suff:case2} $  |\prefset_{\ordering,\hstar}( \hypothesis^k, \Hypotheses(\demonstrations^k \cup \{\demonstration'\}))| <
    \prefset_{\ordering,\hstar}( \hypothesis_\demonstration, \Hypotheses(\demonstrations^k \cup \{\demonstration\}))| + 1$. In this case, the greedy teacher will not pick $\demonstration$, because the gain of demonstration $\demonstration'$ is no less than the gain of $\demonstration$ in terms of the greedy heuristic. In the special case where $|\prefset_{\ordering,\hstar}( \hypothesis^k, \Hypotheses(\demonstrations^k \cup \{\demonstration'\}))| =
    \prefset_{\ordering,\hstar}( \hypothesis_\demonstration, \Hypotheses(\demonstrations^k \cup \{\demonstration\}))|$, the teacher does not pick $\demonstration$, because it makes the learner move away from its current hypothesis and hence is less preferred.
  \end{enumerate}

  For completeness, we also consider the case when the demonstration $\demonstration$ is consistent with $\hypothesis^k$:
   
 \begin{enumerate}[I.]
     %[III.]
     \addtocounter{enumi}{2}
  \item \label{enum:suff:case3} If the teacher picks a consistent demonstration $\demonstration$, then the learner does not move away from her current hypothesis $\hypothesis^k$. As a result, the preference ordering among set $\prefset_{\ordering,\hstar}( \hypothesis^k, \Hypotheses(\demonstrations^k \cup \{\demonstration'\}))$ remains the same.
  \end{enumerate}

  With the above three cases set up, we now reason about the gain of the myopic algorithm. An important observation is that, the greedy teaching demonstrations never add any hypotheses into the preferred version space. Therefore, after $k$ demonstrations, for any demonstration $\demonstration$, we have
  \begin{align}
    \label{eq:prefvschange}
    \prefset_{\ordering,\hstar}( \hypothesis^k, \Hypotheses(\demonstrations^k \cup \{\demonstration\}))
    &= \prefset_{\ordering,\hstar}( \hypothesis_{t-1}, \Hypotheses(\demonstrations^k \cup \{\demonstration\})) \notag \\
    &= \dots \notag \\
    &= \prefset_{\ordering,\hstar}( \hypothesis_{0}, \Hypotheses(\demonstrations^k \cup \{\demonstration\})).
  \end{align}

  Next, we look into the gain for each of the three cases above.
  \begin{enumerate}[I.]
  \item Adding $\demonstration^k$ changes the learner's hypothesis, i.e., $\hypothesis^{k+1} \neq \hypothesis^k$, but the resulting preferred version space induced by $\hypothesis^{k+1}$ is the same with that of $\hypothesis^k$. In this case,
    \begin{align}
      \label{eq:greedy_case1}
      & |\prefset_{\ordering,\hstar}( \hypothesis^k,  \Hypotheses(\demonstrations^k))| - |\prefset_{\ordering,\hstar}( \hypothesis^{k+1},  \Hypotheses(\demonstrations^k \cup \{\demonstration^k\}))| \notag \\
      = ~&|\prefset_{\ordering,\hstar}( \hinit,  \Hypotheses(\demonstrations^k))| \notag\\&~~~~~ - \min_\demonstration |\prefset_{\ordering,\hstar}( \hypothesis^{k+1},  \Hypotheses(\demonstrations^k \cup \{\demonstration\}))| \notag \\
      = ~&|\prefset_{\ordering,\hstar}( \hinit,  \Hypotheses(\demonstrations^k))|  \notag\\&~~~~~- \min_\demonstration |\prefset_{\ordering,\hstar}( \hinit,  \Hypotheses(\demonstrations^k \cup \{\demonstration\}))| \notag \\
      = ~&\max_\demonstration ( |\prefset_{\ordering,\hstar}( \hinit,  \Hypotheses(\demonstrations^k))|  \notag \\&~~~~~- |\prefset_{\ordering,\hstar}( \hinit,  \Hypotheses(\demonstrations^k \cup \{\demonstration\}))| ).
    \end{align}
  \item In this case, we have
    \begin{align*}
      &|\prefset_{\ordering,\hstar}( \hypothesis^{k+1}, \Hypotheses(\demonstrations^k \cup \{\demonstration^k\}))|\\
      &= |\prefset_{\ordering,\hstar}( \hypothesis_z, \Hypotheses(\demonstrations^k \cup \{\demonstration\}))|\\
      &= |\prefset_{\ordering,\hstar}( \hypothesis^k, \Hypotheses(\demonstrations^k \cup \{\demonstration'\}))|
    \end{align*}
    and the myopic algorithm picks $\demonstration^k=\demonstration'$. The learner does not move away from her current hypothesis: $\hypothesis^{k+1} = \hypothesis^k$. However, since $\Hypotheses(\{\demonstration'\})\setminus \Hypotheses(\{\demonstration\}) = \{\hypothesis^k\}$, we get
    \begin{align}
      &|\prefset_{\ordering,\hstar}( \hypothesis^{k+1},  \Hypotheses(\demonstrations^k \cup \{\demonstration^k\}))|\\
      &= |\prefset_{\ordering,\hstar}( \hypothesis^k, \Hypotheses(\demonstrations^k \cup \{\demonstration'\}))| \notag \\
      &= |\prefset_{\ordering,\hstar}( \hypothesis^k, \Hypotheses(\demonstrations^k \cup \{\demonstration\}))| + 1\notag \\
      &\stackrel{(a)}{=} \min_{\demonstration''} |\prefset_{\ordering,\hstar}( \hypothesis^k,  \Hypotheses(\demonstrations^k \cup \{\demonstration''\}))| + 1\notag \\
      &\stackrel{\eqref{eq:prefvschange}}{=} \min_{\demonstration''} |\prefset_{\ordering,\hstar}( \hinit,  \Hypotheses(\demonstrations^k \cup \{\demonstration''\}))| + 1, \notag % \label{eq:objective_rankgreedy}
    \end{align}
    where step (a) is due to the greedy choice of the myopic algorithm.
    Further note that before reaching $\hstar$, the gain of a greedy demonstration is positive. Therefore,
    \begin{align}
      \label{eq:greedy_case2}
      & |\prefset_{\ordering,\hstar}( \hypothesis^k,  \Hypotheses(\demonstrations^k))| - |\prefset_{\ordering,\hstar}( \hypothesis^{k+1},  \Hypotheses(\demonstrations^k \cup \{\demonstration^k\}))| \notag \\
      \geq ~&\frac12 ( 1 + |\prefset_{\ordering,\hstar}( \hypothesis^k,  \Hypotheses(\demonstrations^k))| -\notag \\&~~~~~~~~~~~~~~ |\prefset_{\ordering,\hstar}( \hypothesis^{k+1},  \Hypotheses(\demonstrations^k \cup \{\demonstration^k\}))| ) \notag \\
      = ~&\frac12 \max_\demonstration ( |\prefset_{\ordering,\hstar}( \hinit,  \Hypotheses(\demonstrations^k))|  -\notag \\&~~~~~~~~~~~~~~ |\prefset_{\ordering,\hstar}( \hinit,  \Hypotheses(\demonstrations^k \cup \{\demonstration\}))| ).
    \end{align} 
  \item In this case, $\demonstration^k$ is consistent with $\phi^k$, the greedy gain amounts to the maximal number of hypotheses removed from the preferred version space. Thus we have
    \begin{align}
      \label{eq:greedy_case3}
      & |\prefset_{\ordering,\hstar}( \hypothesis^k,  \Hypotheses(\demonstrations^k))| - |\prefset_{\ordering,\hstar}( \hypothesis^{k+1},  \Hypotheses(\demonstrations^k \cup \{\demonstration^k\}))| \notag \\      = ~&\max_\demonstration ( |\prefset_{\ordering,\hstar}( \hinit,  \Hypotheses(\demonstrations^k))|  -\notag \\&~~~~~~~~~~~~~~ |\prefset_{\ordering,\hstar}( \hinit,  \Hypotheses(\demonstrations^k \cup \{\demonstration\}))| ).
    \end{align}
  \end{enumerate}
  Combining Eq.~\eqref{eq:greedy_case1}, \eqref{eq:greedy_case2}, \eqref{eq:greedy_case3}, we have proven the inequality (\ref{lemma_ineq}).
% \end{proof}

% \paragraph{Proof of Theorem \ref{th_bound}}
% We are now ready to provide the proof for Theorem \ref{th_bound}.
% We show that under the uniform preference model, the cost of the greedy algorithm is within a logarithmic factor of the optimal cost.
% \begin{lemma} Let $\prefset \subseteq \Hypotheses$ be any subset of the version space s.t., $\hstar\in \prefset$. It holds that
%   $$\greedycost(\uordering, \hstar, \hypothesis, \prefset) \leq \left( \log{|\futurecostapprox(\hinit, \Hypotheses)|} + 1 \right)\optcost(\uordering, \hstar, \hypothesis, \prefset).$$
% \end{lemma}
% \begin{proof}
%   As discussed in \secref{sec:alg:optimal}, under uniform preference, the problem of teaching $\hstar$ reduces to the Set Cover problem, and the greedy heuristic $\futurecostapprox_u(\hypothesis, \prefset) = |\{\hypothesis' \in \prefset: \uorderingof{\hypothesis'}{\hypothesis} \leq \uorderingof{\hstar}{\hypothesis}\}| = |\prefset|$ reduces to the set cover objective, which is a monotone submodular function \cite{wolsey1982analysis}. The logarithmic approximation result then follows from Theorem 1 of \citet{wolsey1982analysis}.
% \end{proof}

% \begin{proof}[Proof of Theorem \ref{th_bound}]

  % Let $\prefset \subseteq \Hypotheses$ be any subset of the version space s.t., $\hstar\in \prefset$. It holds that
  % $$\greedycost(\uordering, \hstar, \hypothesis, \prefset) \leq \left( \log{|\futurecostapprox(\hinit, \Hypotheses)|} + 1 \right)\optcost(\uordering, \hstar, \hypothesis, \prefset).$$
  Based on the above discussions, we know that the teaching sequence provided by the myopic adaptive TLIP algorithm never adds new hypotheses into the initial preferred version space $\prefset_{\ordering,\hstar}( \hinit,  \Hypotheses(\demonstrations^k))$, and neither does it move \emph{consistent} hypotheses out of $\prefset_{\ordering,\hstar}( \hinit,  \Hypotheses(\demonstrations^k))$. The teaching objective thus reduces to a set cover objective, and the teaching finishes once all hypotheses, except $\hstar$, in the initial preferred version space are covered.

  From inequality (\ref{lemma_ineq}), we show that with each demonstration, the gain of the myopic adaptive TLIP algorithm is at least $\frac12$ the gain of the greedy set cover algorithm. Therefore, when the preference function is local but satisfies Conditions 1 and 2 of Theorem \ref{th_bound}, the myopic adaptive TLIP algorithm reduces to a 2-approximate greedy set cover algorithm for minimizing the AN teaching cost \cite{wolsey1982analysis}. So we have the logarithmic approximation results as follows.
	\[
    \begin{split}
    \textrm{AN-Cost}^{\textrm{WC}}_{\mathcal{S}}(\mathcal{T}^{\textrm{m}},& \mathcal{L}_{\sigma})\le\\
    &2(\log \vert\tilde{\Phi}_{\mathcal{S}}\vert+1)\textrm{AN-Complexity}_{\mathcal{S}}(\mathcal{L}_{\sigma}).
    \end{split}
    \] 
    
For minimizing the AL teaching cost, the myopic adaptive TLIP algorithm reduces to a 2-approximate greedy weighted set cover algorithm with the weights being the lengths of the demonstrations. So we have the logarithmic approximation results as follows.
	\[
    \begin{split}
    \textrm{AL-Cost}^{\textrm{WC}}_{\mathcal{S}}(\mathcal{T}^{\textrm{m}},& \mathcal{L}_{\sigma})\le\\
    &2(\log \vert\tilde{\Phi}_{\mathcal{S}}\vert+1)\textrm{AL-Complexity}_{\mathcal{S}}(\mathcal{L}_{\sigma}).
    \end{split}
    \]
    
\section{Case Study: Robot Navigation Scenario}
In this case study, we consider a robotic navigation scenario in a simulated space partitioned into 81 cells as shown in Fig.\ref{fig_map}. Each cell is associated with a color. A robot can be only at one cell at any time. The robot has five possible actions at each time step: stay still, go north, go south, go east or go west. For simplicity, we assume that each action is deterministic, i.e., there is zero slip rate. For example, when the robot takes action to go north at the cell located at (1, 1), it will land in the cell located at (2, 1). However, if the robot hit the boundaries, it will remain at the same position. 

As shown in the gridworld map, if the robot is at a red cell, then at the next time step, it can be at a red cell or a blue cell, but cannot be at a green cell or a yellow cell; and if the robot is at a green cell, then at the next time step, it can be at a green cell, a yellow cell or a blue cell, but cannot be at a red cell. We add such transition constraints to the Integer programming formulation in computing the demonstrations. 

\begin{figure}[th] 
	\centering
	\includegraphics[width=8cm]{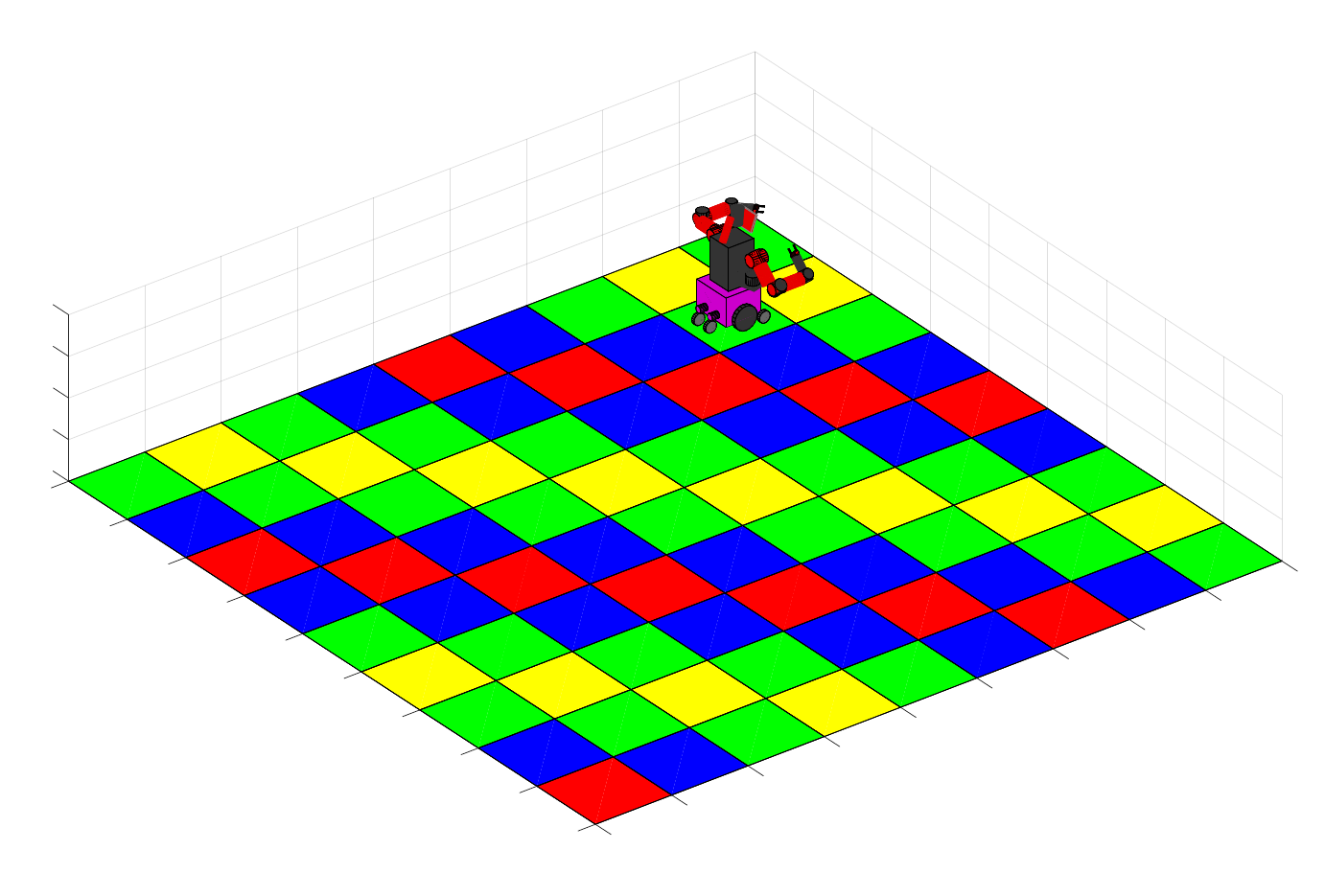}\caption{Simulated space in the robotic navigation scenario.}
	\label{fig_map}
\end{figure}

The hypothesis set of pLTL$_f$ formulas are listed in Table \ref{table_space}. The set $S$ of states is $\{$Red, Blue, Green, Yellow$\}$. In each teaching session, we randomly select both the initial hypothesis and the target hypothesis from the hypothesis set. All results are averaged over 10 teaching sessions.
% We set the ground-truth formula as ${\phi}^{\ast}=F_{\le3}(x\le5)$. 
  \begin{table}[h]
  	\begin{center}
  	
  		\caption{Hypothesis set of pLTL$_f$ formulas ($a=5, 10, 15$ correspond to hypothesis sets of sizes 90, 180 and 270, respectively). } \label{table_color}
%  		\rowcolors{2}{gray!25}{white}
        \scalebox{1}{
  		\begin{tabular}{|c|p{0.5\textwidth}|} 
  			\hline
  			Example & Hypothesis pLTL$_f$ Formulas \\
  			\hline
  			F-formulas &$F_{\le1}$Red, $F_{\le1}$Blue, $F_{\le1}$Green, $F_{\le1}$Yellow, \newline $F_{\le2}$Red, $F_{\le2}$Blue, $F_{\le2}$Green, $F_{\le2}$Yellow, \newline 
  			\dots \newline 
  			$F_{\le a}$Red, $F_{\le a}$Blue, $F_{\le a}$Green, $F_{\le a}$Yellow\\
  			\hline
  			G-formulas &$G_{\le1}$Red, $G_{\le1}$Blue, $G_{\le1}$Green, $G_{\le1}$Yellow, \newline $G_{\le2}$Red, $G_{\le2}$Blue, $G_{\le2}$Green, $G_{\le2}$Yellow, \newline 
  			\dots \newline 
  			$G_{\le a}$Red, $G_{\le a}$Blue, $G_{\le a}$Green, $G_{\le a}$Yellow\\
  			\hline
  		\end{tabular}
  		}
  	\end{center}
  \end{table}

\begin{figure}[!h]
	\centering
	\includegraphics[width=8cm]{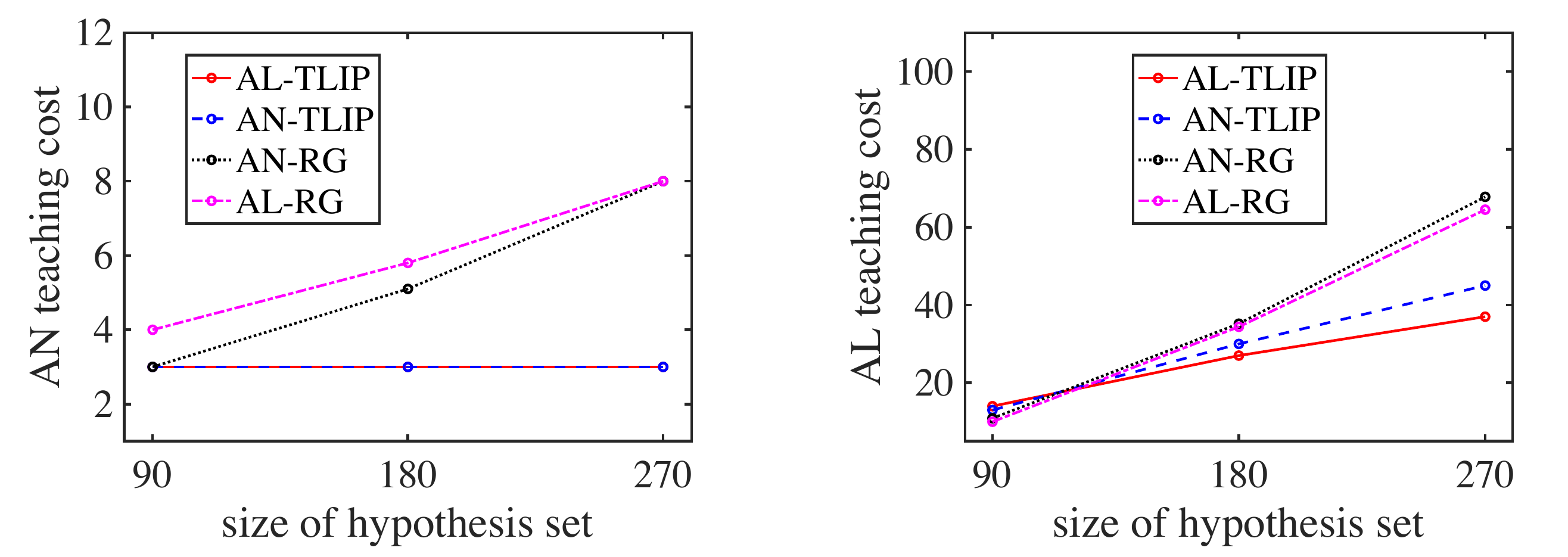}\caption{AN and AL teaching costs under global (uniform) preferences with increasing sizes of the hypothesis set in the robotic navigation scenario.}  
	\label{result1_robot}
\end{figure}		

\paragraph{Global preferences}We implement the following four methods for comparison for myopic teaching performances.
\begin{itemize}\denselist
    \item \textbf{AN-TLIP}: TLIP for minimizing AN teaching cost.
    \item \textbf{AL-TLIP}: TLIP for minimizing AL teaching cost.
    \item \textbf{AN-RG}: randomized greedy algorithm for minimizing AN teaching cost. At each iteration, we greedily pick the demonstration with minimal AN teaching cost among a randomly selected subset of demonstrations.   
    \item \textbf{AL-RG}: same with AN-RG except that here we minimize the AL teaching cost.
\end{itemize}

\figref{result1_robot} shows that AN-TLIP and AL-TLIP are the best for minimizing the AN teaching costs and the AL teaching costs, respectively. The AN teaching costs using AN-TLIP are the same as the AN teaching costs using AL-TLIP, which are up to 62.5\% less than those using AN-RG and AL-RG. The AL teaching costs using AL-TLIP are 17.78\%, 45.43\% and 42.64\% less than those using AN-TLIP, AN-RG and AL-RG, respectively.

\paragraph{Local preferences} We use the mapping $\varrho$ that maps each color to an integer. Specifically, $\varrho($Red$)=1$, $\varrho($Blue$)=2$, $\varrho($Green$)=3$, and $\varrho($Yellow$)=4$. For two pLTL$_f$ formulas $\phi_1=F_{\le i_1}$Red and $\phi_2=F_{\le i_2}$Green, we define the Manhattan distance between $\phi_1$ and $\phi_2$ as $\vert i_1-i_2\vert + \vert \varrho(\textrm{Red})-\varrho(\textrm{Green})\vert$. We consider the following local
preference:\\
(1) The learner prefers formulas with the same temporal operator as that in the learner's previous hypothesis;\\
(2) With the same temporal operator the learner prefers formulas that are ``closer'' to the learner's previous hypothesis in terms of the Manhattan distance. 

	\begin{figure}[th]
		\centering
		\includegraphics[width=8cm]{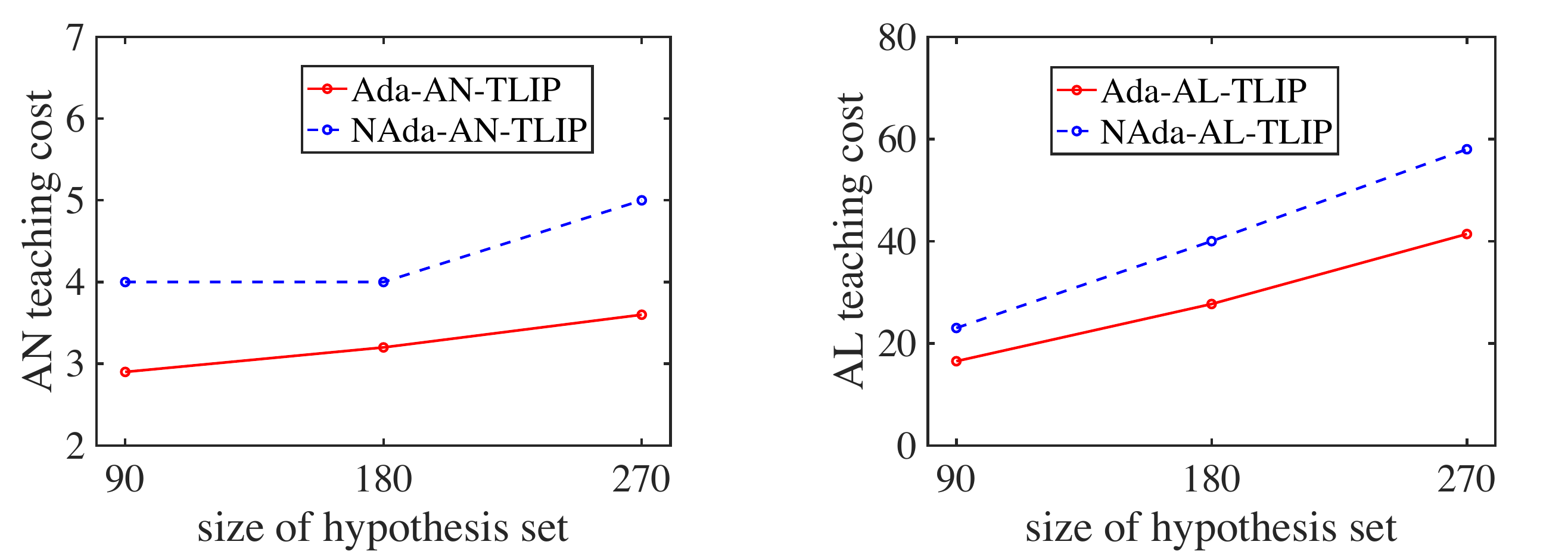}\caption{AN and AL teaching costs for adaptive and non-adaptive teaching with increasing sizes of the hypothesis set in the robotic navigation scenario.} 
		\label{result3}
	\end{figure}
To test the advantage of adaptive teaching, we compare it with non-adaptive teaching in the presence of uncertainty. We consider local preferences with added uncertainty noises. Specifically, the learner has equal preference of selecting the formulas in the version space that have the least Manhattan distance from the current hypothesis and also any formula that can be perturbed from these formulas in the version space (here ``perturb'' means adding or subtracting the parameters $i$ or $\varrho(\cdot)$ by 1).

\figref{result3} shows that Ada-AN-TLIP (i.e., adaptive AN-TLIP) can reduce the AN teaching costs by 28\% compared with NAda-AN-TLIP (i.e., non-adaptive AN-TLIP), and Ada-AL-TLIP (i.e., adaptive AL-TLIP) can reduce the AN teaching costs by 30.75\% compared with NAda-AL-TLIP (i.e., non-adaptive AL-TLIP).

\end{document}